%%
%% This is file `sample-sigconf.tex',
%% generated with the docstrip utility.
%%
%% The original source files were:
%%
%% samples.dtx  (with options: `sigconf')
%% 
%% IMPORTANT NOTICE:
%% 
%% For the copyright see the source file.
%% 
%% Any modified versions of this file must be renamed
%% with new filenames distinct from sample-sigconf.tex.
%% 
%% For distribution of the original source see the terms
%% for copying and modification in the file samples.dtx.
%% 
%% This generated file may be distributed as long as the
%% original source files, as listed above, are part of the
%% same distribution. (The sources need not necessarily be
%% in the same archive or directory.)
%%
%% The first command in your LaTeX source must be the \documentclass command.
\documentclass[sigconf,nonacm]{acmart}

\DeclareMathOperator*{\argmax}{arg\,max}
\usepackage{amsthm}
\usepackage{amsfonts}
\usepackage{amsmath}
\usepackage{bm}
\usepackage{subfig}
\usepackage{graphicx}
\usepackage{caption}
\usepackage{thmtools}
\usepackage{thm-restate}
\usepackage{enumitem}
\usepackage[ruled,vlined,linesnumbered,lined,commentsnumbered,noend]{algorithm2e}
\usepackage{mathtools}

\DeclareMathOperator*{\argmin}{argmin}
\DeclarePairedDelimiter\ceil{\lceil}{\rceil}
\newcommand{\Mod}[1]{\ (\mathrm{mod}\ #1)}

%%
%% \BibTeX command to typeset BibTeX logo in the docs
\AtBeginDocument{%
  \providecommand\BibTeX{{%
    \normalfont B\kern-0.5em{\scshape i\kern-0.25em b}\kern-0.8em\TeX}}}

%% Rights management information.  This information is sent to you
%% when you complete the rights form.  These commands have SAMPLE
%% values in them; it is your responsibility as an author to replace
%% the commands and values with those provided to you when you
%% complete the rights form.
% \setcopyright{acmcopyright}
% \copyrightyear{2021}
% \acmYear{2021}
% \acmDOI{xx.xxxx/xxxxxxx.xxxxxxx}

% %% These commands are for a PROCEEDINGS abstract or paper.
% \acmConference[Singapore '21]{Singapore '21: KDD}{August 14--18, 2021}{Singapore}
% \acmBooktitle{Singapore '21: KDD, August 14--18, 2021, Singapore}
% \acmPrice{15.00}
% \acmISBN{xxx-x-xxxx-xxxx-x/xx/xx}

\copyrightyear{2021} 
\acmYear{2021} 
\setcopyright{acmcopyright}\acmConference[KDD '21]{Proceedings of the 27th ACM SIGKDD Conference on Knowledge Discovery and Data Mining}{August 14--18, 2021}{Virtual Event, Singapore}
\acmBooktitle{Proceedings of the 27th ACM SIGKDD Conference on Knowledge Discovery and Data Mining (KDD '21), August 14--18, 2021, Virtual Event, Singapore}
\acmPrice{15.00}
\acmDOI{10.1145/3447548.3467370}
\acmISBN{978-1-4503-8332-5/21/08}

%%
%% Submission ID.
%% Use this when submitting an article to a sponsored event. You'll
%% receive a unique submission ID from the organizers
%% of the event, and this ID should be used as the parameter to this command.
%%\acmSubmissionID{123-A56-BU3}

%%
%% The majority of ACM publications use numbered citations and
%% references.  The command \citestyle{authoryear} switches to the
%% "author year" style.
%%https://www.overleaf.com/project/6016418ccd512ee595e71d7d
%% If you are preparing content for an event
%% sponsored by ACM SIGGRAPH, you must use the "author year" style of
%% citations and references.
%% Uncommenting
%% the next command will enable that style.
%%\citestyle{acmauthoryear}

%%
%% end of the preamble, start of the body of the document source.
\settopmatter{printacmref=true}
\begin{document}
% \fancyhead{}
\settopmatter{printfolios=true}
%%
%% The "title" command has an optional parameter,
%% allowing the author to define a "short title" to be used in page headers.
\title{Q-Learning Lagrange Policies for Multi-Action Restless Bandits}

%%
%% The "author" command and its associated commands are used to define
%% the authors and their affiliations.
%% Of note is the shared affiliation of the first two authors, and the
%% "authornote" and "authornotemark" commands
%% used to denote shared contribution to the research.
%\author{Jackson Killian, Arpita Biswas, Sanket Shah, Milind Tambe}
%\affiliation{Harvard University}

\author{Jackson A. Killian}
\email{jkillian@g.harvard.edu}
\orcid{0000-0001-8555-1327}
\affiliation{%
  \institution{Harvard University}
  \city{Cambridge}
  \state{MA}
  \country{USA}
}

\author{Arpita Biswas}
\email{arpitabiswas@seas.harvard.edu}
\orcid{0000-0002-5720-013X}
\affiliation{%
  \institution{Harvard University}
  \city{Cambridge}
  \state{MA}
  \country{USA}
}

\author{Sanket Shah}
\email{sanketshah@g.harvard.edu}
\orcid{0000-0002-0632-7966}
\affiliation{%
  \institution{Harvard University}
  \city{Cambridge}
  \state{MA}
  \country{USA}
}

\author{Milind Tambe}
\email{milind_tambe@harvard.edu}
\orcid{0000-0003-3296-3672}
\affiliation{%
  \institution{Harvard University}
  \city{Cambridge}
  \state{MA}
  \country{USA}
}

%%
%% By default, the full list of authors will be used in the page
%% headers. Often, this list is too long, and will overlap
%% other information printed in the page headers. This command allows
%% the author to define a more concise list
%% of authors' names for this purpose.

%\renewcommand{\shortauthors}{Killian et al.}

%%
%% The abstract is a short summary of the work to be presented in the
%% article.
\begin{abstract}
Multi-action restless multi-armed bandits (RMABs) are a powerful framework for constrained resource allocation in which $N$ independent processes are managed. However, previous work only study the offline setting where problem dynamics are known. We address this restrictive assumption, designing the first algorithms for learning good policies for Multi-action RMABs online using combinations of Lagrangian relaxation and Q-learning. Our first approach, MAIQL, extends a method for Q-learning the Whittle index in binary-action RMABs to the multi-action setting. We derive a generalized update rule and convergence proof and establish that, under standard assumptions, MAIQL converges to the asymptotically optimal multi-action RMAB policy as $t\rightarrow{}\infty$. However, MAIQL relies on learning Q-functions and indexes on two timescales which leads to slow convergence and requires problem structure to perform well. Thus, we design a second algorithm, LPQL, which learns the well-performing and more general Lagrange policy for multi-action RMABs by learning to minimize the Lagrange bound through a variant of Q-learning. To ensure fast convergence, we take an approximation strategy that enables learning on a single timescale, then give a guarantee relating the approximation's precision to an upper bound of LPQL's return as $t\rightarrow{}\infty$. Finally, we show that our approaches always outperform baselines across multiple settings, including one derived from real-world medication adherence data.
\end{abstract}

%%
%% The code below is generated by the tool at http://dl.acm.org/ccs.cfm.
%% Please copy and paste the code instead of the example below.
%%
\begin{CCSXML}
<ccs2012>
<concept>
<concept_id>10010147.10010257.10010258.10010261</concept_id>
<concept_desc>Computing methodologies~Reinforcement learning</concept_desc>
<concept_significance>500</concept_significance>
</concept>
</ccs2012>
\end{CCSXML}

\ccsdesc[500]{Computing methodologies~Reinforcement learning}

%%
%% Keywords. The author(s) should pick words that accurately describe
%% the work being presented. Separate the keywords with commas.
\keywords{Multi-action Restless Bandits; Q-learning; Lagrangian Relaxation}

%% A "teaser" image appears between the author and affiliation
%% information and the body of the document, and typically spans the
%% page.

%%
%% This command processes the author and affiliation and title
%% information and builds the first part of the formatted document.
\maketitle

\section{Introduction}
\textit{Restless Multi-Armed Bandits} (RMABs) are a versatile sequential decision making framework in which, given a budget constraint, a planner decides how to allocate resources among a set of independent processes that evolve over time. This model, diagrammed in Fig.~\ref{fig:schematic}, has wide-ranging applications, such as in healthcare~~\cite{lee2019optimal,adityamate2020collapsing,bhattacharya2018restless}, anti-poaching patrol planning~\cite{qian2016restless}, sensor monitoring tasks~\cite{Ianello2012,glazebrook2006some}, machine replacement \cite{ruiz2020multi}, and many more. However, a key limitation of these approaches is they only allow planners a binary choice---whether or not to allocate a resource to an arm at each timestep. However, in many real world applications, a planner may choose among multiple actions, each with varying cost and providing varying benefits. For example, in a community health setting (e.g., Figure~\ref{fig:schematic}), a health worker who monitors patients' adherence to medication may have the ability to provide interventions via text, call, or in-person visit. Such \textit{multi-action} interventions require varying amount of effort (or cost), and cause varying effects on patients' adherence. Given a fixed budget, the problem for a health worker is to decide what interventions to provide to each patient and when, with the goal of maximizing the overall positive effect (e.g., the improvement of patients' adherence to medication). 
%of many reasons (1)~the effect of these interventions on each patient is uncertain and unknown a priori, (2)~
%Another example is anti-poaching patrol planning, where a patrol officer could commit to one hour patrolling one target, while spending two hours on another, providing differing levels of protection to each target.

Owing to the improved generality of \textit{multi-action RMABs}
% and real-world applicability 
over binary-action RMABs, this setting has gained attention in recent years~\cite{glazebrook2011general,hodge2015asymptotic,killian2021multiAction}. However, critically, all these papers have assumed the \textit{offline} setting, in which the dynamics of all the underlying processes are assumed to be known before planning. This assumption is restrictive since, in most cases, the planner will not have perfect information of the underlying processes, for example, how well a patient would respond to a given type of intervention. % or how much one hour of patrolling will deter a poacher. 
% In this paper, we focus on the more challenging \textit{online} setting in which the dynamics of the systems must be learned in tandem with planning. 
% \iffalse

%\jk{I'm in favor of shortening or cutting this section to make room for more real-estate to explain, e.g., why we can't use naive Q-learning. or maybe build up the argument that although 2-action RMAB online methods exist, they can be really bad if you want to plan your resource with more fine-grained actions } 
% Online learning approaches to sequential decision making problems can be broadly arranged in two camps: model-based and model-free. Model-based methods keep an explicit \textit{model} of the environment, e.g., the probability that a patient adheres to medication after receiving an intervention. This often comes with the advantage of allowing the learning to converge to a good resource allocation \textit{policy} faster, but at a greater computational cost, which becomes considerable when state and action spaces are large. Conversely, model-free methods \textit{do not} keep an explicit model of the environment, and instead learn the value of policy actions directly from rewards sampled from the online environment. This reduces the computational cost of learning, with the trade-off of requiring more samples to converge to good policies. In general, the modeler must evaluate whether a model-based or model-free approach is more appropriate given their particular domain, user need, and computational constraints.
% \fi

To address this shortcoming in previous work, this paper presents the first algorithms for the \textit{online} setting for multi-action RMABs. Indeed, the online setting for even binary-action RMABs has received only limited attention, in the works of Fu et al.~\cite{fu2019towards},  Avrachenkov and Borkar \cite{avrachenkov2020whittle}, and Biswas et al.~\cite{biswas2021learn,biswas2021learning}. These papers adopt variants of the Q-learning update rule~\cite{watkins1989learning,watkins1992q}, a well studied reinforcement learning algorithm, for estimating the effect of each action across changing dynamics of the systems. These methods aim to learn Whittle indexes~\cite{whittle1988restless} over time and use them for choosing actions. In the offline version, it has been shown that these indexes lead to an optimal selection policy when the RMAB instances meet the \textit{indexability} condition. However, these methods only apply to binary-action RMABs. Our paper presents two new algorithms for online multi-action RMABs to address the shortcomings of previous work and presents an empirical comparison of the approaches. The paper provides three key contributions: 
% In this paper, we consider similar indexes for more than two actions, and provide an online method to learn these indexes. Additionally, we propose a different method that do not rely on indexability assumption. We study the Lagrangian relaxation (detailed in Section~\ref{sec:prelim}) of the  multi-action RMAB problem and aim to learn the Lagrangian multipliers over time using point estimates. We show the efficacy of both these proposed methods on some synthetically generated domains and also on a real-world TB adherence dataset. The key contributions of this paper are as follows:

\begin{figure*}[!ht]
\includegraphics[width=0.93\textwidth]{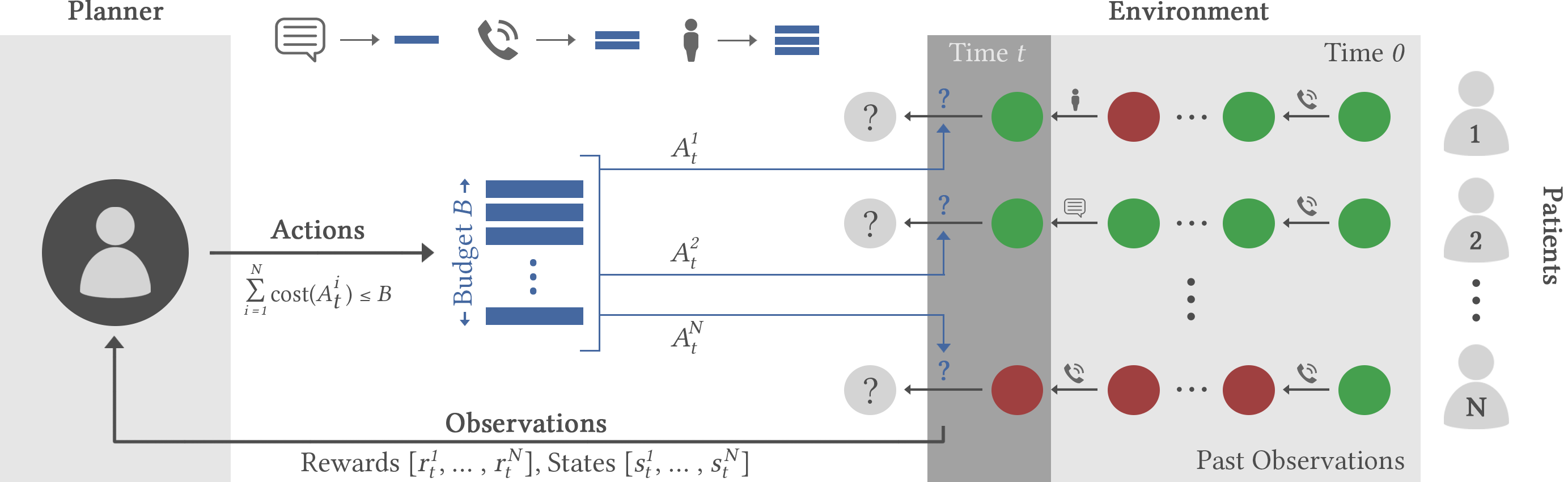}
\centering
% \caption{Schematic diagram for visualizing patients' health monitoring problem as a multi-action RMAB.}
\caption{Schematic of a multi-action RMAB. At each timestep, $t$, the planner (e.g., health worker) takes one action on each of $N$ processes (e.g., patients). The sum cost of actions each timestep must not exceed a budget, $B$. After taking actions at each timestep, the planner observes the rewards and state transitions of the processes, which the planner uses to improve their action selection in the future. The goal is to maximize reward.}
\label{fig:schematic}
\end{figure*}
\begin{enumerate}[leftmargin=*,topsep=0.5em]
    \item \textbf{We design Multi-action Index-based Q-learning (MAIQL)}. We consider a multi-action notion of indexability where the index for each action represents the ``fair charge'' for taking that action \cite{glazebrook2011general}. If the dynamics of the underlying systems were known beforehand, an optimal policy for multi-action indexable RMABs would choose actions based on these indexes when a linear structure on the action costs is assumed \cite{hodge2015asymptotic}. We establish that, when these dynamics are unknown and are required to be learned over time, MAIQL provably converges to these indexes for any multi-action RMAB instance following the assumptions on cost and indexability. However, these assumptions can be limiting, and in addition, the algorithm requires a two-timescale learning procedure that can be slow and unstable.
     %, a multi-action extension of an index-based Q-learning method for binary action bandits [cite borkar]. Our extension is derived using a multi-action notion of indexability in which the index for each action represents the ``fair charge'' for taking that action in terms of expected future reward [cite Glazebrook]. MAIQL learns these indexes via a stochastic approximation update rule we derive that generalizes the index update rule in WIBQL. 
    %\item \textbf{We prove that MAIQL converges to the multi-action indexes}, and thus asymptotically optimal greedy index policies under the requirement of multi-action indexability and evenly divisible action costs [cite glazebrook 2015].
    \item \textbf{We propose a more general algorithm, Lagrange Policy Q-learning (LPQL)}. This method takes a holistic back-to-the-basics approach of analyzing the Lagrangian relaxation of the multi-action RMAB problem and learning to play the \textit{Lagrange policy} using the estimated Q values which are updated over time. This policy converges more quickly than MAIQL and other benchmark algorithms, is applicable to problems with arbitrary cost structures, and does not require the indexability condition. %We \jk{\textbf{todo}: give a Lipschitz-based approximation guarantee} and demonstrate its superior performance experimentally.
    \item \textbf{We demonstrate the effectiveness of MAIQL and LPQL as compared to various baselines on several experimental domains,} including two synthetically generated domains and derived from a real-world dataset on medication adherence of tuberculosis patients. Our algorithms converge to the state-of-the-art offline policy much faster than the baselines, taking a crucial step toward real-world deployment in online settings.\footnote{Code available at: \url{https://github.com/killian-34/MAIQL_and_LPQL}}
\end{enumerate}

\section{Related Work}
The \textit{restless multi-armed bandit} (RMAB) problem was introduced by Whittle~\cite{whittle1988restless} where he showed that a relaxed version of the offline RMAB problem can be solved optimally using a heuristic called the \textit{Whittle index policy}. This policy is shown to be asymptotically optimal when the RMAB instances satisfy the \textit{indexability} condition \cite{weber1990index}. Moreover, Papadimitriou and Tsitsiklis~\cite{papadimitriou1994complexity} established that solving RMABs is PSPACE-hard, even for the special case when the transition rules are deterministic. 

Since then, a vast literature have studied various subclasses of RMABs and provided algorithms for computing the Whittle index. Lee et al.~\cite{lee2019optimal} study the problem of selecting patients for screening with the goal of maximizing early-stage cancer detection under limited resources.  Mate et al.~\cite{adityamate2020collapsing} consider bandits with two states to model a health intervention problem, where the uncertainty collapses after an active action. They showed that the model is indexable and gave a mechanism for computing the Whittle index policy. Bhattacharya~\cite{bhattacharya2018restless} models the problem of maximizing the coverage and spread of health information with limited resources as an RMAB and proposes a hierarchical policy. Similarly, several other papers \cite{glazebrook2006some,Sombabu2020,Liu2010} give Whittle indexability results for different subclasses of RMABs where there are only two possible actions.

For more than two actions, Glazebrook et al.~\cite{glazebrook2011general,hodge2015asymptotic} extended Whittle indexability to multi-action RMABs where the instances are assumed to have special monotonic structure. Along similar lines, Killian et al.~\cite{killian2021multiAction} proposed a method that leverages the convexity of an approximate Lagrangian version of the multi-action RMAB problem. 

Also related to multi-action RMABs are weakly coupled Markov decision processes (WCMDP). The goal of a WCMDP is to maximize reward subject to a set of constraints over actions, managing a finite number of independent Markov decision processes (MDPs). Hawkins~\cite{hawkins2003langrangian} studied a Lagrangian relaxation of WCMDPs and proposed an LP for minimizing the Lagrange bound. On the other hand, Adelman and Mersereau~\cite{adelman2008relaxations} provide an approximation algorithm that achieves a tighter bound than the Lagrange approach to WCMDPs, trading off scalability. A more scalable approximation method is provided by Gocgun and Ghate~\cite{gocgun2012lagrangian}.

%\jk{@Jackson/@Sanket/@Arpita -- let's just start by listing papers here, and grouping them by category. Should be 4: offline/online binary/multi-action RMABs. A fifth could be learning weakly coupled MDPs}
%Papers by category:
%\begin{enumerate}
%    \item offline binary: knapsack indexes \cite{jacko2016resource} -- only 2 actions, but different costs associated with the action for each arm  (maybe relevant).
    
%    \item learning binary: Borkar's WIBQL \cite{avrachenkov2020whittle}, Extension of WIBQL that uses the optimality of threshold policies to learn faster, \cite{gafni2020learning} 2-action learning \cite{xiong2021learning} (not published).
%    \item offline multi-action: Hawkins' thesis \cite{hawkins2003langrangian} (or is that weakly coupled?), Glazebrook multi-action indexes paper \cite{glazebrook2011general}
%    \item learning multi-action: We're the first?
%    \item Weakly coupled:
  
%\end{enumerate}

However, these papers focused only on the offline versions of the problem in which the dynamics (transition and observation models) are known apriori. In the online setting, there has been some recent work on binary-action RMABs. Gafni and Cohen~\cite{gafni2020learning} propose an algorithm that learns to play the arm with the highest \emph{expected} reward. However, this is suboptimal for general RMABs since rewards are state- and action-dependent. 
% in real-world domains such as healthcare adherence, the decision of taking an action depends on the current state of the arms.
% Thus, always taking the active action on the arm with highest average reward is not desirable.
Addressing this, Biswas et al.~\cite{biswas2021learn} give a Q-learning-based based algorithm %where the Q values depend on state, action and a parameter $\lambda$. 
that acts on the arms that have the largest difference between their active and passive Q values. Fu et al.~\cite{fu2019towards} take a related approach that adjust the Q values by some $\lambda$, and use it to estimate the Whittle index. Similarly, Avrachenkov and Borkar \cite{avrachenkov2020whittle} provide a two-timescale algorithm that learns the Q values as well as the index values over time. However, their convergence proof requires indexability and that all arms are homogeneous with the same underlying MDPs. We use the two-timescale methodology and define a multi-action indexability criterion to provide a general framework to learn multi-action RMABs with provable convergence guarantees. Our work is the first to address the multi-action RMAB setting online.

\section{Preliminaries and Notations}\label{sec:prelim}
A Multi-action RMAB instance consists of $N$ arms and a budget $B$ on the total cost. Each arm $i\in[N]$ follows an MDP \cite{puterman2014markov}. We define an MDP $\{ \mathcal{S}, \mathcal{A}, \mathcal{C}, r, T, \beta\}$ as a finite set of states $\mathcal{S}$, a finite set of $M$ actions $\mathcal{A}$, a finite set of action costs $\mathcal{C}:=\{c_j\}_{j\in \mathcal{A}}$, a reward function $r: \mathcal{S} \rightarrow \mathbb{R}$, a transition function $T(s, a, s^{\prime})$ denoting the probability of transitioning from state $s$ to state $s^\prime$ when action $a$ is taken, and a discount factor $\beta \in [0,1)$\footnote{$\beta$ is only included under the discounted reward case, as opposed to the average reward case which we address later.}. %We assume $T$ is unknown. 
An MDP \emph{policy} $\pi: \mathcal{S} \rightarrow \mathcal{A}$ maps states to actions. The long-term \emph{discounted reward} of arm $i$ starting from state $s$ is defined as
\begin{equation}\label{eq:discounted_reward}
J_{\beta,\pi^i}^{i}(s) = E\left[\sum_{t=0}^\infty \beta^tr^i(s^i_{t})|\pi^i, s^i_0 = s\right]
\end{equation}
where $s^i_{t+1}\sim T(s^i_t, \pi^i(s^i_t), \cdot)$. For ease of exposition, we assume the action sets and costs are the same for all arms, but our methods will apply to the general case where each arm has arbitrary (but finite) state, action, and cost sets. Without loss of generality, we also assume that the actions are numbered in increasing order of their costs, i.e., $0=c_0 \leq c_1 \leq \ldots, c_M$. Now, the planner must take decisions for all arms jointly, subject to two constraints each round: (1)~select one action for each arm and (2)~the sum of action costs over all arms must not exceed a given budget $B$. Formally, the planner must choose a decision matrix $\bm{A} \in \{0,1\}^{N\times M}$ such that:
\begin{align}
    &\sum_{j=1}^{M} \bm{A}_{ij} = 1 \hspace{3mm} \forall i \in [N]
    \qquad
    &\sum_{i=1}^{N}\sum_{j=1}^{M} \bm{A}_{ij}c_j \le B \label{eq:budget_constraints}
\end{align}
Let $\overline{\mathcal{A}}$ be the set of decision matrices respecting the constraints in \ref{eq:budget_constraints} and let $\bm{s} = (s^1, ..., s^{N})$ represent the initial state of each arm. The planner's goal is to maximize the total discounted reward of all arms over time, subject to the constraints in \ref{eq:budget_constraints}, as given by the constrained Bellman equation:
\begin{equation}
\begin{aligned}\label{eq:combined_value_function}
    J(\bm{s}) = \max_{\bm{A}\in \overline{\mathcal{A}}}\left\{\sum_{i=1}^{N} r^i(s^i) + \beta E[J(\bm{s}^\prime) | \bm{s}, \bm{A}]\right\}
\end{aligned}
\end{equation}
However, this corresponds to an optimization problem with exponentially many states and combinatorially many actions, making it PSPACE-Hard to solve directly \cite{papadimitriou1994complexity}. To circumvent this, we take the Lagrangian relaxation of the second constraint in \ref{eq:budget_constraints} \cite{hawkins2003langrangian}:
\begin{equation}
\begin{aligned}\label{eq:relaxed_value_func}
    &J(\bm{s}, \lambda) = \\
    &\max_{\bm{A}}\left\{\sum_{i=1}^{N} r^i(s^i) + \lambda(B - \sum_{i=1}^{N}\sum_{j=1}^{M} \bm{A}_{ij}c_j) + \beta E[J(\bm{s}^\prime,\lambda) | \bm{s}, \bm{A}] \right\}
\end{aligned}
\end{equation}
Since this constraint was the only term coupling the MDPs, relaxing this constraint decomposes the problem except for the shared term $\lambda$. So Eq.~\ref{eq:relaxed_value_func} can be rewritten as (see \cite{adelman2008relaxations}):
\begin{equation}
\begin{aligned}\label{eq:decoupled_value_func}
    &J(\bm{s}, \lambda) = \frac{\lambda B}{1-\beta} + \sum_{i=1}^{N}\max_{a_j^i\in\mathcal{A}^i}\{(Q^i(s^i, a_j^i, \lambda)\}
\end{aligned}
\end{equation}
\begin{equation}
\begin{aligned}\label{eq:arm_value_func_lagrange}
    &\text{where } Q^i(s^i, a_j^i, \lambda) = \\
    &r^i(s^i) - \lambda c_j + \beta\sum_{s^{\prime}}T(s^i, a_j^i, s^{\prime}) \max_{a_j\in\mathcal{A}^i}\{(Q^i(s^{\prime}, a_j, \lambda)\}
\end{aligned}
\end{equation}
In Eq.~\ref{eq:arm_value_func_lagrange}, each arm is effectively decoupled, allowing us to solve for each arm independently for a given value of $\lambda$. The choice of $\lambda$, however, affects the resulting optimal policies in each of the arms. One intuitive interpretation of $\lambda$ is that of a ``penalty'' associated with acting -- given a fixed budget $B$, a planner must weigh the cost of acting $\lambda c_j$ against its ability to collect higher rewards. Thus, as $\lambda$ is increased, the optimal policies on each arm will tend to prefer actions that generate the largest "value for cost".

The challenges we address are two-fold: (1) How to learn policies online that can be tuned by varying $\lambda$ and (2) How to make choices for the setting of $\lambda$ that lead to good policies. Our two algorithms in Sections \ref{sec:MAIQL} and \ref{sec:LPQL} both build on Q-Learning to provide alternative ways of tackling these challenges -- where MAIQL builds on the rich existing literature of ``index'' policies, LPQL goes ``back to basics'' and provides a more fundamental approach based on the Lagrangian relaxation discussed above.
\newtheorem{assumption}{Assumption}

\section{Algorithm: MAIQL}\label{sec:MAIQL}
Our first algorithm will reason about $\lambda$'s influence on each arm's value function independently. Intuitively, this is desirable because it simplifies one size-$N$ problem to $N$ size-one problems that can be solved quickly. Our goal will be to compute \textit{indexes} for each action on each arm that capture a given action's value, then greedily follow the indexes as our policy. Such an \textit{index policy} was proposed by \citet{whittle1988restless} for binary-action RMABs, in which the index is a value of $\lambda$ such that the optimal policy is indifferent between acting and not acting in the given state. This policy has been shown to be asymptotically optimal under the indexability condition~\cite{weber1990index}.

Glazebrook et al.~\cite{glazebrook2011general} and \citet{hodge2015asymptotic} extended the definition and guarantees, respectively, of the Whittle index to multi-action RMABs that satisfy the following assumptions:
\begin{enumerate}
    \item Actions have equally spaced costs, i.e., after normalization, the action costs can be expressed as $\{0, 1, \ldots, M - 1\}$.
    \item The utility of acting is submodular in the cost of the action.
\end{enumerate}
For such multi-action RMABs, \citet{glazebrook2011general} defines multi-action indexability and the multi-action index as follows:
\begin{definition}[Multi-action indexability]\label{def:maindexability}
    An arm is multi-action indexable if, for every given action $a_j \in \mathcal{A}$, the set of states in which it is optimal to take an action of cost $c_j$ or above decreases monotonically from $\mathcal{S}\xrightarrow{}\varnothing$ as $\lambda$ increases from $-\infty \to \infty$.
\end{definition}
\begin{definition}[Multi-action index, $\lambda^*_{s, a_j}$]
    For a given state $s$ and action $a_j$, the multi-action index is the minimum $\lambda^*_{s,a_j}$ that is required to become indifferent between the actions $a_j$ and $a_{j-1}$:
    \begin{align}
        \lambda^*_{s, a_j} &= \inf_\lambda\{Q(s, a_{j}, \lambda) \leq Q(s, a_{j-1}, \lambda)\} \\
        &= \lambda, \; s.t. \; Q(s, a_{j}, \lambda) = Q(s, a_{j-1}, \lambda) \label{eqn:maindex}
    \end{align}
    where $Q(s, a_{j}, \lambda)$ is the Q-value of taking action $a_j$ in state $s$ with current and future rewards adjusted by $\lambda$.
\end{definition}

Given these multi-action indices, \citet{hodge2015asymptotic} suggest a way to greedily allocate units of resources that is asymptotically optimal -- assume that the arms are in some state $\bm{s}$, then iterate from $1 \ldots B$ and in each round allocate a unit of resource to the arm with the highest multi-action index associated with the next unit of resource. Specifically, if $\bm{\theta} = \langle \theta^1\ldots \theta^N \rangle$ units of resource have been allocated to each arm so far, then we allocate the next unit of resource to the arm with the highest $\lambda^*_{s^i, a_{\theta^i}}$. Given that the action utilities ($\lambda^*_{s, a_j}$) are submodular in $a_j$ by assumption, this \textit{multi-action index policy} leads to the allocation in which the sum of multi-action index values across all the arms are maximised.

Given the policy's theoretical guarantees, an index-based solution to the multi-action RMAB problem is attractive. The question then is how to calculate the value of the multi-action indices $\lambda^*_{s, a_j}$. In the online setting (when the RMAB dynamics are unknown apriori), \citet{avrachenkov2020whittle} proposes a method for estimating the Whittle indexes for binary-action RMABs and, in addition, proves that this algorithm's estimate converges to the Whittle index.

In this section, we describe the \textbf{M}ulti-\textbf{A}ction \textbf{I}ndex \textbf{Q}-\textbf{L}earning (MAIQL) algorithm. Our algorithm generalizes the update rule of the learning algorithm proposed by \citet{avrachenkov2020whittle}. We consider the notion of multi-action indexability from \citet{glazebrook2011general} to create an update rule that allows us to estimate the multi-action indexes (Section \ref{sec:maiqlalgo}). In addition, we use the multi-action indexability property to show that the convergence guarantees from \citet{avrachenkov2020whittle} are preserved in this multi-action extension (Section \ref{sec:maiqltheory}).

\subsection{Algorithm}\label{sec:maiqlalgo}
From Equation \ref{eqn:maindex}, we observe that if we could estimate the Q values for all the possible values of $\lambda$, we would know the value of $\lambda^*_{s, a_{j}}$. This is not possible in general, but we can convert this insight into an update rule for estimating  $\lambda^*_{s, a_{j}}$ in which we update the current estimate in the direction such that Eq.~\ref{eqn:maindex} is closer to being satisfied. Based on this, we propose an iterative scheme in which Q values and $\lambda^*_{s, a_{j}}$ are learned together.

An important consideration is that, because the Q and $\lambda^*_{s, a_{j}}$ values are inter-dependent, it is not straightforward to learn them together, since updating the estimate of one may adversely impact our estimate of the other. To combat this, we decouple the effects of learning each component by relegating them to separate time-scales. Concretely, this means that an adaptive learning rate $\alpha(t)$ for the Q values and $\gamma(t)$ for $\lambda$-values are chosen such that $\lim_{t \to \infty} \frac{\gamma(t)}{\alpha(t)} \to 0$, i.e., the Q values are learned on a fast-time scale in which $\lambda$ values can be seen as quasi-static (details in the appendix). The resultant two time-scale approach is given below.

To calculate the multi-action index, for a given state $s \in \mathcal{S}$ and action $a_j \in \mathcal{A}$, we store two sets of values: (1) the Q values for all states and actions, $Q(s, a_j)\; \forall s \in S,\, a_j \in \mathcal{A}$, and (2) the current estimate of the multi-action index $\lambda_{s, a_j}$. All the Q and $\lambda$ values are initiated to zero. Then, for a given state $s$ in which we take action $a_j$, we observe the resultant reward $r$ and next state $s^\prime$, then perform the following updates:
\begin{enumerate}[leftmargin=*]
    \item \textbf{Q-update:} At a fast time-scale (adjusted by $\alpha(\nu(s, a_j, t))$), update to learn the correct Q values as in standard Q-learning:
    \begin{align}\label{eqn:maqupdate}
        Q_{\lambda}^{t+1}(s, a_j) = Q_{\lambda}^t(s, a_j) + \alpha(&\nu(s, a_j, t)) \big [ [r(s) - \lambda^{t}_{s,a_j}c_j\nonumber\\
        - f(Q_{\lambda}^t) + 
        & \; \max_{a_j^\prime} Q_{\lambda}^t(s^\prime, a_j^\prime)] - Q_{\lambda}^t(s, a_j) \big ]
    \end{align}
    where $\nu(s, a_j, t)$ is a ``local-clock'' that stores how many times the specific $Q_{\lambda}(s, a_j)$ value has been updated in the past, and $f(Q_{\lambda}^t) = \frac{\sum_{s, a_j} Q_{\lambda}^t(s, a_j)}{\sum_{s, a_j} 1}$ is a function whose value converges to the optimal average reward \cite{abounadi2001learning}. We give the average reward case to align with the traditional derivation of binary-action Whittle indexes, but this update (and related theory) can be extended easily to the discounted reward case.
    \item \textbf{$\lambda$-update:} Then, at a slower time-scale (adjusted by a function $\gamma(t)$), we update the value of $\lambda^t_{s, a_j}$ according to:
    \begin{equation}\label{eqn:malamupdate}
        \lambda^{t+1}_{s,a_j} = \lambda^t_{s, a_j} + \gamma(t) \cdot (Q_{\lambda}^t(s, a_j) - Q_{\lambda}^t(s, a_{j-1}))
    \end{equation}
\end{enumerate}

Note that the updates described in the paragraph above correspond to the estimation of a single multi-action index. To efficiently estimate $\lambda^*(s, a_j)\; \forall\, s, a$, we make use of the fact that our algorithm, like the Q-learning algorithm on which it is based, is off-policy -- an off-policy algorithm does not require collecting samples using the policy that is being learned. As a result, rather than learn each of these multi-action index values sequentially, we learn them in parallel based on the samples drawn from a single policy.

Specifically, since learning each index value requires imposing the current estimate $\lambda$ on all current and future action costs, and since a separate index is learned for all arms, states, and non-passive actions, $N(M-1)|\mathcal{S}|$ separate Q-functions (each a table of size $|\mathcal{S}|\times M$) and $\lambda$-values must be maintained, requiring $\mathcal{O}(NM^2|\mathcal{S}|^2)$ memory. However, since the estimation of each index is independent, each round, the index and its Q-function can be updated in parallel, keeping the process efficient, but requiring $\mathcal{O}(NM|\mathcal{S}|)$ time if computed in serial. To take actions, we follow an $\epsilon$-greedy version of the multi-action index policy -- which, when not acting randomly, greedily selects indices in increasing size order for each arm's current state, taking $\mathcal{O}(NM)$ time -- and store the resultant $\langle s,a,r,s'\rangle$ tuple in a replay buffer. The replay buffer is important because, in the multi-action setting, each $(s, a)$ pair is not sampled equally often; specifically, especially when $B$ is small, it is less likely to explore more expensive actions. After every fixed number of time-steps of running the policy, we randomly pick some $\langle s,a,r,s'\rangle$ tuples from the replay buffer with probability weighted inversely to the number of times the tuple has been used for training, and update the Q values associated with each of the multi-action indexes and the $\lambda_{s,a}$ estimate for the sampled $(s, a)$.\footnote{all algorithms in this paper will be equipped with the replay buffer for fairness of comparison.} The resulting algorithm is guaranteed to converge to the multi-action indexes. Pseudocode is given in the appendix.%\footnote{Online Appendix: \url{https://teamcore.seas.harvard.edu/files/teamcore/files/maiql_kdd21_online_appendix.pdf}}

% \jk{lots to add here about why we are doing these things, how does the replay come in and help with the fact that (s,a) pairs don't get sampled evenly etc.}

\subsection{Theoretical Guarantees}\label{sec:maiqltheory}
% \jk{@Sanket - this section will show that our MAIQL is guaranteed to converge to the multi-action indexes. Many parts of the proof are similar to borkar et al. (the binary action index q-learning paper), except where we require multi-action-ness -- those new parts are included in the main text and the less novel parts will be included in the appendix for completeness.}

% \noindent OVERVIEW:
% \begin{enumerate}
%     \item What we're trying to show overall -- multi-scale Q-Learning converges to the optimal index values.
%     \item Concrete theorem statement.
%     \item Proof Sketch
%     \item Where it deviates from Borkar et. al., and statement of the Lemma
% \end{enumerate}
The attractiveness of the MAIQL approach comes from the fact that, if the problem is multi-action indexable, the indexes can always be found. Formally, we show:

\begin{restatable}{theorem}{MAIQL}\label{thm:MAIQL}
    MAIQL converges to the optimal multi-action index $\lambda_{s, a}^*$ for a given state $s$ and action $a$ under Assumptions \ref{ass:unichain}, \ref{ass:async}, \ref{ass:alpha}, and the problem being multi-action indexable.
\end{restatable}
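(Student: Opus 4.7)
The plan is to follow the two-timescale stochastic approximation template of Avrachenkov and Borkar, adapted to the multi-action index definition of Glazebrook et al. Each multi-action index $\lambda^*_{s,a_j}$ is estimated independently, so it suffices to prove convergence for a single fixed $(s,a_j)$ pair; the parallel off-policy updates for different pairs do not interact in the analysis. The asynchronous Q-learning assumption together with the unichain assumption (so that every state–action pair is visited infinitely often) and the standard Robbins–Monro step-size conditions on $\alpha(\cdot)$ and $\gamma(\cdot)$ with $\gamma(t)/\alpha(t)\to 0$ place the iterates squarely in Borkar's two-timescale framework.

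First I would analyze the fast timescale. Treating $\lambda^t_{s,a_j}$ as quasi-static (constant) on the Q-clock, the update in Eq.~\ref{eqn:maqupdate} is exactly the RVI Q-learning recursion of Abounadi, Bertsekas, and Borkar with the penalty $-\lambda c_j$ baked into the reward of action $a_j$. Under Assumptions \ref{ass:unichain}, \ref{ass:async}, and \ref{ass:alpha}, their convergence theorem yields $Q^t_\lambda(\cdot,\cdot)\to Q_\lambda(\cdot,\cdot)$ almost surely, where $Q_\lambda$ is the unique (up to the additive normalization imposed by $f$) fixed point of the $\lambda$-modified average-reward Bellman equation. Boundedness of the iterates, the martingale-difference property of the noise, and Lipschitz continuity of the Bellman operator in $\lambda$ are standard and carry over essentially verbatim from the binary-action proof.

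Next I would analyze the slow timescale. Once the fast iterates are viewed as tracking $Q_\lambda$, the $\lambda$-update in Eq.~\ref{eqn:malamupdate} becomes a Robbins–Monro recursion for the limiting ODE
\begin{equation*}
\dot{\lambda}_{s,a_j}(t) \;=\; Q_{\lambda(t)}(s,a_j) - Q_{\lambda(t)}(s,a_{j-1}).
\end{equation*}
The multi-action indexability assumption (Definition~\ref{def:maindexability}) is precisely what is needed here: as $\lambda$ grows from $-\infty$ to $\infty$, the set of states in which action $a_j$ or more costly is optimal shrinks monotonically, so the sign of $Q_\lambda(s,a_j)-Q_\lambda(s,a_{j-1})$ flips at most once and does so exactly at $\lambda^*_{s,a_j}$ by Eq.~\ref{eqn:maindex}. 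This gives a unique, globally asymptotically stable equilibrium of the ODE at $\lambda^*_{s,a_j}$. Invoking Borkar's two-timescale theorem then yields $\lambda^t_{s,a_j}\to\lambda^*_{s,a_j}$ almost surely.

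The main obstacle will be the slow-timescale step, specifically converting indexability into the monotonic sign-change property of $Q_\lambda(s,a_j)-Q_\lambda(s,a_{j-1})$ that the ODE analysis requires. In the binary-action case this is immediate because indexability is literally phrased in terms of when active dominates passive; in the multi-action case one must argue that, although Definition~\ref{def:maindexability} concerns optimal policies (which fold in choices among many actions), the pairwise $Q$-difference for consecutive actions is nonetheless monotone in $\lambda$. I would handle this by combining the equal-spacing and submodular-in-cost assumptions of Glazebrook et al.~\cite{glazebrook2011general} (which force the action-value gaps to cross zero in the order $a_1,a_2,\dots,a_M$) with the observation that, away from any crossing, the envelope $\max_{a}Q_\lambda(\cdot,a)$ is piecewise affine and decreasing in $\lambda$ with slope $-c_{j^*}$. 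Everything else (boundedness of $\lambda^t$, bounded noise, Lipschitzness of $Q_\lambda$ in $\lambda$) is routine and can be quoted from the binary-action argument.
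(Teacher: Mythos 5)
Your proposal follows essentially the same route as the paper's proof: the same two-timescale ODE argument, with RVI Q-learning (Abounadi et al.) giving fast-timescale convergence to $Q^*_\lambda$ under quasi-static $\lambda$, and multi-action indexability supplying the sign condition on $\dot{\lambda} = Q^*_{\lambda}(s,a_j) - Q^*_{\lambda}(s,a_{j-1})$ that drives the slow iterate to the unique equilibrium $\lambda^*_{s,a_j}$. The one place you go further is in flagging that Definition~\ref{def:maindexability} is phrased in terms of optimal actions rather than the pairwise consecutive $Q$-gap, and that the equal-spacing and submodularity assumptions are needed to bridge that; the paper's proof asserts the sign property directly from indexability, so your extra care is warranted but does not change the structure of the argument.
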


\begin{proof}[Proof Sketch]
\textbf{At the fast time-scale:} We can assume $\lambda_{s, a}$ to be static. Then, for a given value of $\lambda_{s, a} = \lambda'$, the problem reduces to a standard MDP problem, and the Q-learning algorithm converges to the optimal policy.

\textbf{At the slow time-scale:} We can consider the fast-time scale process to have converged, and we have the optimal Q values $Q^*_{\lambda'}$ corresponding to the current estimate of $\lambda_{s,a}$. Then, by the multi-action indexability property, we know that if $\lambda < \lambda^*$ an action of weight $a$ or higher is preferred. As a result, $Q^*_{\lambda_{s, a}}(s, a) - Q^*_{\lambda_{s, a}}(s, a-1) > 0$, and so $\lambda^{t+1} > \lambda^{t}$. When $\lambda > \lambda^*$, the opposite is true and so $\lambda^{t+1} < \lambda^{t}$. As a result, we constantly improve our estimate of $\lambda$ such that we eventually converge to the optimal multi-action index, i.e., $\lim_{t\to\infty} \lambda^{t} \to \lambda_{s, a}^*$.
\end{proof}

The detailed proof follows along the lines of \citet{avrachenkov2020whittle}, and can be found in the appendix. However, while they consider convergence in the binary-action case, our approach generalizes to the multi-action setting. The crux of the proof lies in showing how the multi-action index generalizes the properties of the Whittle index in the multi-action case, and leads to convergence in the slow time-scale.

% Concretely, to prove that the estimate $\lambda^n_{s, a}$ converges to the multi-action index in the slow time-scale, we extend use of the framework in \citet{lakshminarayanan2017stability} by \citet{avrachenkov2020whittle}. Specifically, we show that Assumption $A5$ from the paper, which is encapsulated in the lemma below, extends to the multi-action case.

% \begin{restatable}[Slow Time-Scale Convergence]{lemma}{slowconv}\label{lem:conv}
%     We show that, for MAIQL, (1) $g_c(\lambda_{s, a}) = \frac{g(Q^*_{c\lambda_{s, a}}, c\lambda_{s, a})}{c}$ is bounded $\forall c \geq 1$, and (2) the limiting ODE $\dot{\lambda} = \lim_{c \to \infty} g_c(\lambda_{s, a}(t))$ has 0 as its uniquely stable asymptotic equilibrium.
    
%     Here, $g(Q, \lambda_{s, a}) = Q_\lambda(s, a) - Q_\lambda(s, a-1)$ is the $\lambda$-update function.
% \end{restatable}
\subsection{MAIQL Limitations}
The main limitations of MAIQL are (1) it assumes multi-action indexability and equally-spaced action costs to be optimal and (2) it learns on two time-scales, making convergence slow and unstable in practice. i.e., for the convergence guarantees to hold, MAIQL must see ``approximately'' infinitely many of all state-action pairs before updating $\lambda$ once. This can be difficult to ensure in practice for arms with transition probabilities near $0$ or $1$, and for problems where the budget is small, since many more samples of (s,a) pairs with cheap actions will be collected than ones with expensive actions.

\section{Algorithm: LPQL}
\label{sec:LPQL}

In this section, we provide a more fundamental approach by studying the problem of minimizing $J(\cdot,\lambda)$~(Equation~\ref{eq:decoupled_value_func}) over $\lambda$. By minimizing this value, we aim to compute a tight bound on Eq.~\ref{eq:combined_value_function}, the value function of the original, non-relaxed problem, then follow the policies implied by the bound, i.e., the Lagrange policy. However, computing $J(\cdot,\cdot)$ requires the $Q^i(s,a,\lambda)$ values which in turn require the knowledge of transition probabilities (as shown in Equation~\ref{eq:arm_value_func_lagrange}). In absence of the knowledge of transition probabilities, we propose a method, called \textbf{L}agrange \textbf{P}olicy \textbf{Q}-\textbf{L}earning (LPQL). This method learns a representation of $Q^i(s,a_j,\lambda)$ by using samples obtained from the environment via a mechanism similar to MAIQL. However, rather than estimating $Q^i(s,a_j,\lambda)$ with the purpose of estimating some downstream value of $\lambda$ (i.e., indexes), now the goal is to estimate the entire curve $Q^i(s,a_j,\lambda)$ with respect to $\lambda$. It is straightforward to show that $Q^i(s,a_j,\lambda)$ is convex decreasing in $\lambda$ \cite{hawkins2003langrangian}, meaning that once we have a representation of $Q^i(s,a_j,\lambda)$, minimizing $J(\cdot,\cdot)$ simply corresponds to a one-dimensional convex optimization problem that can be solved extremely quickly.

In addition to its speed, this approach is desirable because it is designed for RMAB instances without specific structures, i.e., LPQL accommodates arbitrary action costs and needs no assumption on indexability. 
% Note that such general RMAB problems are PSPACE hard even for binary-action RMABs~\cite{papadimitriou1994complexity} and thus, proving broad performance guarantees is more difficult in the general setting. 
It does so by computing the Lagrange policy, which is asymptotically optimal for binary-action RMABs regardless of indexability \cite{weber1990index}, and works extremely well in practice for multi-action settings \cite{killian2021multiAction}. LPQL enjoys these benefits, and further, is designed to work on a single learning timescale, making its convergence faster and more stable than MAIQL. %LPQL trades off asymptotic performance guarantees for generality and speed. 

% For ensuring a near-optimal solution to Equation~\ref{eq:decoupled_value_func}, we need a good choice of $\lambda$ as well as a way to learn the corresponding Q values in an online manner. A natural choice for $\lambda$ is the one that minimizes the bound. 

In the offline setting, $J(\cdot,\cdot)$ can be minimized by solving this linear program (LP), which can be derived directly from Eq.~\ref{eq:decoupled_value_func} \cite{hawkins2003langrangian}:
\begin{equation}
\begin{aligned}\label{eq:the_lp}
    &\min_{\lambda}J(\bm{s}, \lambda) = \min_{V^i(s^i, \lambda), \lambda} \frac{\lambda B}{1-\beta} + \sum_{i=0}^{N-1}\mu^i(s^i) V^i(s^i, \lambda) \\
    &\text{s.t. }V^i(s^i, \lambda) \ge r^i(s^i) - \lambda c_j + \beta\sum_{s^{i\prime}}T(s^i, a_j^i, s^{i\prime}) V^i(s^{i\prime},\lambda) \\
    & \forall i \in \{0,...,N-1\},\hspace{2mm} \forall s^i \in \mathcal{S},\hspace{2mm} \forall a_j \in \mathcal{A},\text{ and } \lambda \ge 0
\end{aligned}
\end{equation}
where $\mu^i(s^i) = 1$ if $s^i$ is the start state for arm $i$ and is $0$ otherwise and $V^i(s^i, \lambda)=\max_{a_j}\{Q^i(s^i,a_j, \lambda)\}$. 
%\jk{rough draft} At first glance, it may seem that we cannot learn Eq.~\ref{eq:the_lp} in a model free way since this LP requires a representation of the transition probabilities $T$. However, these constraints only serve to define the value functions $V(s,\lambda)$. If we had another way to represent $V(s,\lambda)$, we could solve Eq.~\ref{eq:the_lp} by simply minimizing its unconstrained objective. In fact, as $V(s,\lambda)$ is defined here it is a convex, decreasing function of $\lambda$, so the resulting problem would be a convex optimization problem with a single global minimum that we could solve quickly. So how do we represent $V(s,\lambda)$? We will approximate it by learning point estimates for $V(s,\lambda)$ at $n_{lam}$ values of $\lambda$. This can be achieved by keeping $n_{lam}$ copies of the value function for each arm and running standard q-learning for each copy with reward functions adjusted by the given $\lambda$. This is desirable since it leads to \textit{no per-round slowdown in learning as compared to MAIQL} since the same $(s,a,r,s)$ tuples can be used for all $n_{lam}$ copies of the action value function that we wish to learn simultaneously. This also implies that the standard convergence guarantees for q-learning also apply. Although this approach assumes no structure on the problem such as as indexability, making it difficult to give performance guarantees, the increased convergence speed and stability it provides will be highly desirable in practice.
To learn $Q^i(s^i,a_j, \lambda)$ in the offline setting, we will build a piecewise- linear convex representation of the curve by estimating its value at various points $\lambda_p$. To do this, we keep a three-dimensional vector for each arm $Q(s,a_j, \lambda)\ \in \mathbb{R}^{|\mathcal{S}|\times M\times n_{lam}}$ where $n_{lam}$ is the number of points $\lambda_p$ at which to estimate the curve. For now, we choose the set of $\lambda_p$ to be an equally spaced grid between 0 and some value $\lambda_{\max}$. Since $V^i(s,\lambda)$ is convex decreasing in $\lambda$, the largest possible value of $\lambda$ that could be a minimizer of $J(\cdot,\cdot)$ is the $\lambda$ where $\frac{dQ^i(s,a_j,\lambda)}{d\lambda}=0$. Killian et al.~\cite{killian2021multiAction} show that this value is no greater than $\frac{\max\{r\}}{\min\{\mathcal{C}\} (1-\beta)}$, so this will serve as $\lambda_{\max}$ unless otherwise specified.

On each round, an $(s,a_j,r,s^\prime)$ tuple is sampled for each arm. We store estimates of Q for each state, action, and $\lambda_p$ value, requiring $\mathcal{O}(N n_{lam} |\mathcal{S}|M)$ memory. The update rule for $Q(s, a_j, \lambda_p)$ is:
\begin{align}
    Q^{t+1}&(s, a_j, \lambda_p) = Q^t(s, a_j, \lambda_p) + \alpha(\nu(s, a_j, n))\ast \nonumber\\
    &\big [ [r(s) - \lambda_p c_j + \beta\max_{a_j^\prime\in \mathcal{A}} Q^t(s^\prime, a_j^\prime, \lambda_p)] - Q^t(s, a_j, \lambda_p) \big ]
\end{align}
Where $\beta$ is the discount factor. Each round, we sample a $(s,a_j,r,s^\prime)$ tuple per arm, and for each arm loop to update $Q^{t+1}(s, a_j, \lambda_p)$ $\forall p$. As in MAIQL, this update can be parallelized but requires $\mathcal{O}(N n_{lam})$ time if computed serially. To choose a policy each round, we compute the minimum of Eq.~\ref{eq:decoupled_value_func} 
% using estimates of $Q(s, a_j, \lambda)$ (modeled as $Q(s, a_j, \lambda_p)$ $\forall p$) 
by finding the point at which increasing $\lambda_p$ (stepping from 0, $\frac{\lambda_{\max}}{n_{lam}},\dots,\lambda_{\max}$) results in zero or positive change in objective value, as computed via our estimates $Q(s, a_j, \lambda_p)$, taking $\mathcal{O}(n_{lam})$ time. As our estimates $Q(s, a, \lambda_p)$ converge, we approximate points exactly on the true $Q(s, a_j, \lambda)$ curve. Even at convergence, there will be some small approximation error in the slope of the line that will manifest as error in the objective value, but in the next subsection, we show that the approximation error can be made arbitrarily small as $n_{lam}$ increases.

Once the minimizing value of $\lambda$ ($\lambda_{min}$) is found, we follow the knapsack from \cite{killian2021multiAction} to select actions, i.e., we input $Q(s, a_j, \lambda_{min})$ as values in a knapsack where the costs are the corresponding $c_j$ and the budget is $B$. We then use the Gurobi optimizer software \cite{gurobi} to solve the knapsack, then carry out the policy in accordance with the selected actions, taking $\mathcal{O}(NMB)$ time in total \cite{killian2021multiAction}. Pseudocode for LPQL is given in the appendix.

% \jk{TODO: @Jackson - Discuss how the value functions are convex decreasing in lambda and are defined by the constraints of the LP. Difficulty is that to solve this LP directly, need to keep a model of the transitions. The key insight is that if we have some other way to represent the value functions (imagine you have a blackbox that just works) then you can minimize this objective without the constraints and it just becomes a convex optimization problem which is easy to solve. The next key insight is that we can create such a black box by learning $V(\lambda)$ by approximating it at various points of $\lambda$ via $\lambda$-adjusted Q-learning. This only creates a a factor of $n_{\lambda}$ increase in memory and runtime requirements, v.s. the factor of $S,A$ increases needed for MAIQL. It also has the benefit of only learning on a single timescale and so converges more quickly. Since the convex optimization boils down to a simple form, we can also create a lightning fast algorithm for solving it. Then expand the value functions into Q-functions, and follow the knapsack as we do in AAMAS'21 to execute the Lagrange policy. Exploration done with epsilon greedy. }
\subsection{Theoretical Guarantees}
%\jk{@Arpita. a nice guarantee here would be to give some Lipschitz-based approximation guarantee on the value of lambda that minimzes the Hawkins LP vs our approximated LP, assuming our Q values have converged properly. Seems doable if we estimate $V(\lambda)$ with a uniform grid between 0 and $\max\{\lambda\}$ ($\max\{\lambda\}$ is derived in AAMAS'21, is just a function of reward, cost, and discount factor) and think about how "wrong" the slope could be between any two given points of our approximation v.s. the actual $V(\lambda)$ which is a nice piece-wise linear convex function.}

% LPQL does not have a convergence guarantee or a finite approximation bound because of its generality; however, 
We establish that, given a $\lambda_{\max}$, a higher $n_{lam}$ results in a better approximation of the upper bound of the policy return, given in Eq.~\ref{eq:decoupled_value_func}. We show that, given a state profile $\bm{s}=\{s^1,\ldots,s^{N}\}$, the asymptotic values of $V^i(s^i,\lambda)$ obtained at equally spaced discrete set of $\lambda$ values (over-)approximates Equation~\ref{eq:the_lp}. The smaller the intervals are, the closer is the approximated value of $J(\bm{s}, \lambda)$ at all $\lambda$ points that are not at the interval points. Before stating the theorem formally, we define the \textit{Chordal Slope} Lemma. For ease of representation, we drop the notations $\bm{s}$ and $s^i$ from functions $V()$ and $J()$ and also remove the superscript $i$.

\iffalse
\begin{lemma}
The convex decreasing function $V(\lambda)$ can be approximated by a convex piecewise-linear function $V'(\lambda')$, over equally spaced $\lambda'$ values ($\lambda'\in\{0, x, 2x, 3x,\ldots\}$), such that \[max_{\lambda\geq 0}\  \left(V(\lambda) - V'(\lambda)\right) <  V(0) - V(x).\]
\end{lemma} 

\begin{lemma}
LPQL converges to the optimal values $V^*(\lambda)$ for any real-valued $\lambda\geq 0$ .
\end{lemma}

\begin{theorem} 
Using $V^\prime(\lambda)$ to solve Eq.~\ref{eq:the_lp}, the maximum possible error in objective value is ... 
\end{theorem}
\fi 
\begin{lemma}[The Chordal Slope Lemma~\cite{notes}] Let $F$ be a convex function on $(a, b)$. If $x_1 < x < x_2$ are in $(a,b)$, then for points $P_1 = (x_1, F(x_1))$, $P = (x, F(x))$, and $P2 = (x_2, F(x_2))$, the slope of the straight line $P_1P$ is less than or equal to the slope of the straight line $P1P2$.\label{lemma:slope} 
\end{lemma}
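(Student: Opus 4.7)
The plan is to prove the lemma directly from the definition of convexity: a function $F$ on $(a,b)$ is convex iff for every $y_1, y_2 \in (a,b)$ and every $\theta \in [0,1]$, $F(\theta y_1 + (1-\theta) y_2) \leq \theta F(y_1) + (1-\theta) F(y_2)$. This is a classical result, so the main work is simply to express $x$ as the right convex combination and bookkeep the algebra; there is no serious obstacle.

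First I would express the intermediate point $x$ as a convex combination of $x_1$ and $x_2$. Since $x_1 < x < x_2$, define
\[
\theta \;=\; \frac{x_2 - x}{x_2 - x_1} \in (0,1), \qquad 1 - \theta \;=\; \frac{x - x_1}{x_2 - x_1},
\]
so that $x = \theta x_1 + (1-\theta) x_2$. Applying convexity of $F$ at this combination yields $F(x) \le \theta F(x_1) + (1-\theta) F(x_2)$.

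Next, I would subtract $F(x_1)$ from both sides and substitute the expression for $1-\theta$:
\[
F(x) - F(x_1) \;\le\; (1-\theta)\bigl(F(x_2) - F(x_1)\bigr) \;=\; \frac{x - x_1}{x_2 - x_1}\bigl(F(x_2) - F(x_1)\bigr).
\]
Since $x - x_1 > 0$, dividing both sides by $x - x_1$ gives
\[
\frac{F(x) - F(x_1)}{x - x_1} \;\le\; \frac{F(x_2) - F(x_1)}{x_2 - x_1},
\]
which is exactly the statement that the slope of the chord $P_1P$ is at most the slope of the chord $P_1P_2$. To conclude I would just remark that equality throughout requires equality in the convexity inequality, which occurs for affine $F$ on $[x_1, x_2]$, and mention that a symmetric argument (subtracting from $F(x_2)$ instead of $F(x_1)$) additionally recovers the companion inequality slope$(P_1P_2) \le$ slope$(PP_2)$, which is typically packaged with the Chordal Slope Lemma and is what LPQL's downstream bound will ultimately use.
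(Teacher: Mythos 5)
Your proof is correct: the convex-combination choice $\theta = (x_2-x)/(x_2-x_1)$ is right, the algebra checks out, and dividing by $x-x_1>0$ legitimately yields the chordal slope inequality. Note that the paper itself supplies no proof of this lemma --- it is stated as a known result with a citation to external notes --- so there is no in-paper argument to compare against; your derivation is the standard one directly from the definition of convexity, and it suffices. One minor caveat: your closing remark that equality ``occurs for affine $F$ on $[x_1,x_2]$'' overstates slightly --- equality at a single interior point $x$ forces $F$ to agree with the chord at that point but, for a general convex $F$, equality of the two slopes is equivalent to $F$ being affine on $[x_1,x_2]$ only if you invoke convexity again; this is tangential and does not affect the validity of the lemma's proof.
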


\begin{theorem}
Let $V'(\cdot)$ be a convex piecewise-linear function over equally spaced intervals ($\Lambda:=\{0, x, 2x, 3x,\ldots\}$) that approximates the convex decreasing function $V(\lambda)$, such that \[V(\lambda) = V'(\lambda) \mbox{ for all }\lambda\in\Lambda.\]
If values $V(\cdot)$ are replaced by values $V'(\cdot)$, then  $J(\cdot)$ (Equation~\ref{eq:the_lp}) is better approximated when the interval length $x$ is small.  
\end{theorem}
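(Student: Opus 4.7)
The plan is to show that the pointwise approximation error $e(\lambda):=V'(\lambda)-V(\lambda)$ is nonnegative, vanishes on $\Lambda$, and shrinks monotonically as the grid is refined; then to lift this to $J$ by linearity and take infima to recover the claim about $\min_\lambda J(\cdot)$.

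First I would establish $V(\lambda)\le V'(\lambda)$ for every $\lambda\ge 0$. Any $\lambda\in[kx,(k+1)x]$ can be written as $(1-t)kx + t(k+1)x$ with $t\in[0,1]$, and convexity of $V$ yields $V(\lambda)\le(1-t)V(kx)+tV((k+1)x)=V'(\lambda)$. Hence $e\ge 0$ pointwise, and $e$ vanishes on the grid.

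Next, I would use Lemma~\ref{lemma:slope} to prove monotone improvement under refinement. Halve $x$ to obtain the refined interpolant $V''$ with spacing $x/2$. On the sub-interval $[kx,(k+\tfrac12)x]$, both $V'$ and $V''$ coincide at $\lambda=kx$; applying the Chordal Slope Lemma with $x_1=kx$, $x=(k+\tfrac12)x$, $x_2=(k+1)x$ gives
\[
\frac{V((k+\tfrac12)x)-V(kx)}{x/2}\;\le\;\frac{V((k+1)x)-V(kx)}{x},
\]
so the refined chord has slope no greater than the original. Since the two chords share their left endpoint and $V$ is decreasing, the refined chord lies weakly below the original one on this sub-interval. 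A symmetric argument handles $[(k+\tfrac12)x,(k+1)x]$: there the two chords agree at the right endpoint $(k+1)x$, while convexity forces $V''((k+\tfrac12)x)=V((k+\tfrac12)x)\le \tfrac12(V(kx)+V((k+1)x))=V'((k+\tfrac12)x)$, so again $V''\le V'$. Combining with Step~1, $V\le V''\le V'$ pointwise, and iterating the halving shows the error decreases monotonically as $x$ shrinks.

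Finally, I would push this through to $J$. Because
\[
J(\bm{s},\lambda)=\frac{\lambda B}{1-\beta}+\sum_{i}\mu^i(s^i)\,V^i(s^i,\lambda),
\]
substituting $V'^i$ for $V^i$ yields $J'(\bm{s},\lambda)-J(\bm{s},\lambda)=\sum_i\mu^i(s^i)\,e^i(\lambda)\ge 0$, a nonnegative sum each of whose summands shrinks monotonically under refinement by Step~2. Taking infima,
\[
\min_\lambda J(\bm{s},\lambda)\;\le\;\min_\lambda J'(\bm{s},\lambda)\;\le\;\min_\lambda J(\bm{s},\lambda)+\sup_\lambda\sum_i\mu^i(s^i)\,e^i(\lambda),
\]
so the Lagrange bound computed by LPQL from $V'$ approaches the true bound as $x$ shrinks. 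The main obstacle is giving a precise meaning to ``better approximated''; the cleanest reading is the monotone error-reduction statement above. One small subtlety is uniformity: pointwise monotone decrease of $e^i$ to $0$ upgrades to uniform decrease on the compact interval $[0,\lambda_{\max}]$ (with $\lambda_{\max}$ as defined in Section~\ref{sec:LPQL}) by Dini's theorem, which is what is needed to actually close the gap between $\min_\lambda J'$ and $\min_\lambda J$ rather than leave it merely nonincreasing.
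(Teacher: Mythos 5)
Your proof is correct and rests on the same two ingredients as the paper's: convexity forces the piecewise-linear interpolant to over-approximate, and the Chordal Slope Lemma (Lemma~\ref{lemma:slope}) shows that refining the grid can only shrink that over-approximation. The difference is where you apply them. The paper works directly with $J$: it notes $J$ is convex as a sum of convex functions, anchors two chords of $J$ at a common point $\lambda_0$ over intervals of lengths $x_1 < x_2$, and uses the Chordal Slope Lemma to show the shorter chord lies below the longer one on their overlap, hence closer to $J$ from above. You instead establish the pointwise sandwich $V \le V'' \le V'$ at the level of each arm's value function under a halving of the grid, and sum afterwards. Your route buys three things the paper leaves implicit: (i) you treat both halves of the refined subinterval, whereas the paper compares the two chords only on $[\lambda_0, \lambda_0+x_1]$ and says nothing about $[\lambda_0+x_1, \lambda_0+x_2]$, where the finer grid has an extra knot; (ii) you give ``better approximated'' a precise meaning as monotone pointwise error reduction under nested refinement, upgraded to uniform decay on $[0,\lambda_{\max}]$ via Dini; and (iii) you carry the bound through to $\min_\lambda J$, which is the quantity LPQL actually computes. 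Two cosmetic remarks: the appeal to $V$ being decreasing in your first half-interval argument is unnecessary (two affine functions sharing a left endpoint are ordered by their slopes regardless of sign), and your monotone-domination claim, like the paper's common-anchor chord comparison, strictly applies only to nested or commensurable grids; for arbitrary $x' < x$ one still gets that the error tends to zero, just not that the coarser interpolant dominates the finer one pointwise. Neither affects the conclusion.
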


\begin{proof}
$V(\cdot)$ are convex functions of $\lambda$ which implies that the function $J(\lambda)$, the sum of convex functions, is also a convex function of $\lambda$.  Let us assume that the convex decreasing function $V(\lambda)$ is approximated by a convex continuous piecewise-linear function $V'(\lambda)$, over equally spaced values, taken from the set $\Lambda:=\{0, x, 2x, 3x,\ldots\}$, such that $V(\lambda) = V'(\lambda)$ for all $\lambda\in \Lambda$. Thus, using $V'(\cdot)$ values instead of $V(\cdot)$ values, we obtain an approximation $J'(\cdot)$ of the convex function $J(\lambda)$. The function  $J'(\lambda)$ is a convex function with $J(\lambda)=J'(\lambda)$ for all $\lambda\in \Lambda$. 

Now, let us assume two different values of $x$, say $x_1$ and $x_2$, where $x_1<x_2$. The corresponding sets are $\Lambda_1:=\{0, x_1, 2x_1,\ldots\}$ and $\Lambda_2:=\{0, x_2, 2x_2,\ldots\}$. Considering $\Lambda_1$, and two points $\lambda_0\geq 0$ and $\lambda_1=\lambda_0+x_1$, $J'(\cdot)$ is over approximated by a straight line $\bar{J}(\cdot)$ that connects $(\lambda_0, J(\lambda_0))$ and $(\lambda_1, J(\lambda_1))$. The equation for the line is given by: %Thus, for any value of $\lambda\in[\lambda_0, \lambda_1]$, the difference $\bar{J}(\lambda) - J(\lambda)$ is given by:
\begin{eqnarray}
    \bar{J}_{\lambda_0, x_1, \lambda_1}(\lambda) = J(\lambda_0) + \frac{\lambda - \lambda_0}{x_1}(J(\lambda_1)-J(\lambda_0)) \mbox{ $\forall$ $\lambda_0\leq\lambda\leq \lambda_1$}. \label{eq:delta1}
\end{eqnarray}
Similarly, considering $\Lambda_2$, the point $\lambda_0$, and point $\lambda_2=\lambda_0+x_2$ (where $x_1<x_2$), $J'(\cdot)$ can be over approximated by a straight line $\bar{J}(\cdot)$ that connects $(\lambda_0, J(\lambda_0))$ and $(\lambda_2, J(\lambda_2))$. Thus, for any value of $\lambda\in[\lambda_0,\lambda_2]$, the difference $J'(\lambda) - J(\lambda)$ is given by:
\begin{eqnarray}
   \bar{J}_{\lambda_0, x_2, \lambda_2}(\lambda) = J(\lambda_0) + \frac{\lambda - \lambda_0}{x_2}(J(\lambda_2)-J(\lambda_0)) \mbox{ $\forall$ $\lambda_0\leq\lambda\leq \lambda_2$}. \label{eq:delta2}
\end{eqnarray}

For a given $\lambda\in[\lambda_0, \lambda_1]$, the difference between the approximation obtained by Equation~\ref{eq:delta2} and \ref{eq:delta1} is:
\begin{eqnarray}
&&  (\lambda-\lambda_0)\left(\frac{J(\lambda_2)-J(\lambda_0)}{x_2} -\frac{J(\lambda_1)-J(\lambda_0)}{x_1}\right) \nonumber \\
    && \geq   0 \quad\quad\qquad\qquad (\because \lambda\geq \lambda_0 \mbox{ and } Lemma~\ref{lemma:slope}) \label{eq:diff}
\end{eqnarray}

Thus, smaller the length of each interval, the corresponding surrogate $V'(\cdot)$ values can be used to obtain a better approximation of $J(\cdot)$ values.
\end{proof}

\subsection{Extending LPQL Update Technique to Approximate MAIQL}
The same tactic of approximating $Q(s, a_j, \lambda_p)$ can be used to create an \textbf{approximate version of MAIQL (MAIQL-Aprx)} that learns on a single timescale and is thus more sample efficient and stable. The algorithm follows much in the same way as LPQL, except that $Q(s, a, \lambda_p)$ are not used to minimize the LP. Instead, for each arm on each round, we compute the multi-action index for a given $(s, a_j)$ by finding the $\argmin_{\lambda_p}|Q(s, a_j, \lambda_p) - Q(s, a_{j-1}, \lambda_p)|$. We then choose actions according to the same greedy policy as MAIQL. We can show with the same logic as the LPQL approximation proof that with a large enough $n_{lam}$, the indexes can be approximated to an arbitrary precision. We investigate whether, due to its single timescale nature, this algorithm will have improved sample efficiency and convergence behavior compared to standard MAIQL.

\section{Experimental Results}
% \subsection{Setup}
In this section, we compare our algorithms against both learning baselines (\textbf{WIBQL} (\citet{avrachenkov2020whittle}) and \textbf{QL-$\bm{\lambda}$=0}), and offline baselines (\textbf{Oracle LP}, \textbf{Oracle $\bm{\lambda}$=0}, and \textbf{Oracle-LP-Index}).

\textbf{WIBQL} is designed to learn Whittle indexes for \emph{binary}-action RMABs, but we adapt it to the multi-action setting by allowing it to plan using two actions, namely the passive action $a_0$ and a non-passive action $a_j$ ($j>0$) for the entire simulation. Clearly, this will be suboptimal in general, so we also design a stronger, multi-action baseline, \textbf{QL-$\bm{\lambda}$=0}. This uses standard Q-learning to learn state-action values for each individual arm without reasoning about future costs or the shared budget between arms (i.e., $\lambda=0)$. At each step, the actions are chosen according to the knapsack procedure of LPQL. \textbf{Oracle $\bm{\lambda}$=0} is the offline version of QL-$\lambda$=0 (i.e., it knows the transition probabilities). \textbf{Oracle LP} is the offline version of LPQL that solves Eq.\ref{eq:the_lp} using an LP solver, then follows the same knapsack procedure as LPQL. \textbf{Oracle-LP-Index} is an offline version of MAIQL that computes the multi-action indexes using an LP (see appendix). Since the oracles are computationally expensive, they are run for 1000 timesteps to allow their returns to converge, then are extrapolated.

All algorithms follow an $\epsilon$-greedy paradigm for exploration where $\epsilon$ decays each round according to $\epsilon_0/\ceil*{\frac{t}{D}}$ where $\epsilon_0$ and $D$ are constants. All algorithms were implemented in Python 3.7.4 and LPs were solved using Gurobi version 9.0.3 via the gurobipy interface \cite{gurobi}. All results are presented as the average (solid line) and interquartile range (shaded region) over 20 independently seeded simulations and were executed on a cluster running CentOS with Intel(R) Xeon(R) CPU E5-2683 v4 @ 2.1 GHz with 4GB of RAM.

% We perform three sets of experiments. First, we consider two types of patients (arms), categorized based on how their behaviors change upon receiving a passive action. Second, we consider that the transition probabilities, action costs and rewards are all sampled uniformly at random from a fixed distribution (and thus may not follow the linear cost structure). Third, we use a real-world dataset from a medication adherence domain~\cite{killian2019learning} to evaluate our proposed methods. 

\subsection{Two Process Types}

\begin{figure}[h]
\includegraphics[width=\columnwidth]{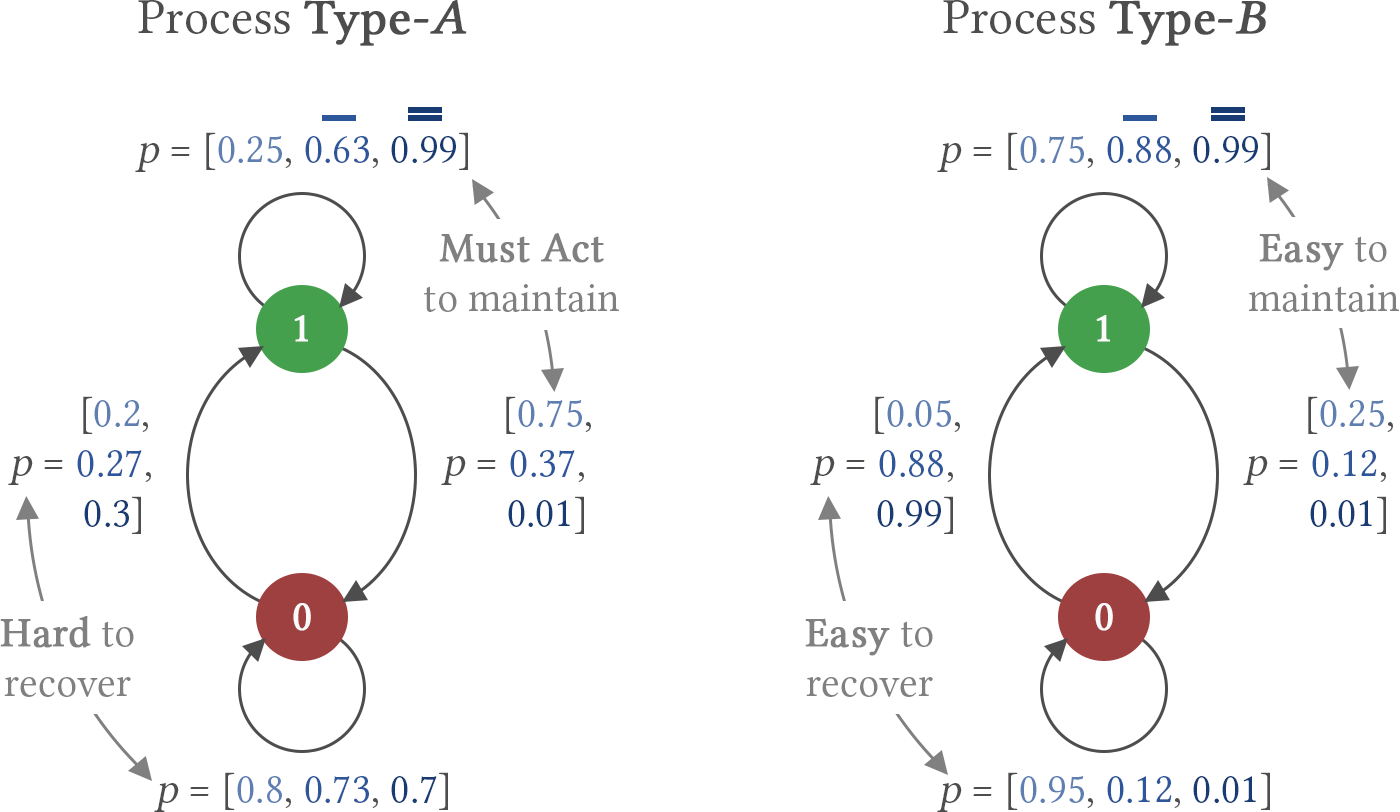}
\centering
\caption{Two Process domain. Type-A arms need constant actions to stay in the good state (reward 1), whereas Type-B arms stay in the good state for many rounds after an action.}
\label{fig:toy_domain_2}
\end{figure}

\begin{figure}[h]
\includegraphics[width=\columnwidth]{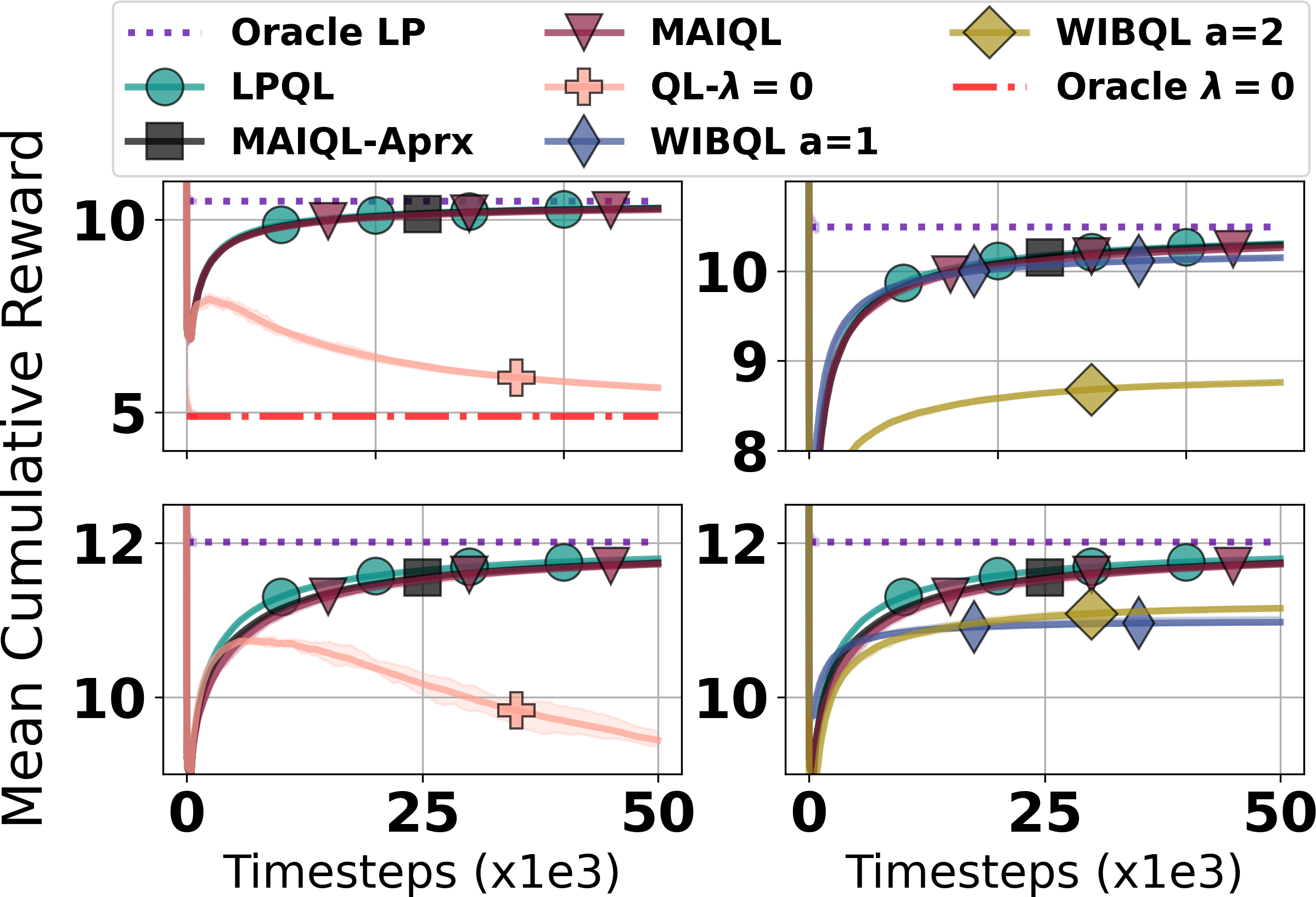}
\centering
\caption{Results from Type-A v.s.~Type-B domain with $N=16$ and  $B=4$ (top row) and $B=8$ (bottom row). Experiments in a row are the same, with different algorithms shown. Budget-agnostic learning converges to a highly suboptimal policy. Our algorithms converge to the best oracle policy, LPQL doing so the quickest. Binary-action planning underperforms except when a small budget forces the optimal policy to only use the cheapest action.}
\label{fig:toy_domain_results_2_vfnc}
\end{figure}
\label{sec:toy2}
In the first experiment, we demonstrate how failing to account for cost and budget information while learning (i.e., QL-$\lambda$=0) can lead to poorly performing policies. The setting has two types of processes (arms), as in Fig.~\ref{fig:toy_domain_2}. Each has 3 actions, with costs 0, 1, and 2. Both arms have a good and bad state that gather 1 and 0 reward, respectively. The \textbf{Type-A} arm must be acted on every round while in the good state to stay there. However, in the bad state it is difficult to recover. This leads QL-$\lambda$=0 to learn that $Q(1,a_{j>0},\lambda=0) - Q(1,a_0,\lambda=0)$ is large, i.e., acting in the good state is important for Type-A arms. Conversely, the \textbf{Type-B} arm will tend to stay in the good state even when not acted on, and when in the bad state, it can be easily recovered with any action. This leads QL-$\lambda$=0 to learn that $Q(1,a_{j>0},\lambda=0) - Q(1,a_0,\lambda=0)$ is small. Thus QL-$\lambda$=0 will prefer to act on Type-A arms. However, if the number of Type-B arms is larger than the available budget, it is clearly better to spend the budget acting on Type-B arms since the action ``goes farther'', i.e., they may spend several rounds in the good state following only a single action, v.s.~Type-A arms which are likely to only spend one round in the good state per action. Our budget-aware learning algorithms learn this tradeoff to converge to well-performing policies that greatly outperform cost-unaware planning.

We report the mean cumulative reward of each algorithm, i.e., its cumulative reward divided by the current timestep, averaged over all seeds. Fig.~\ref{fig:toy_domain_results_2_vfnc} shows the results with $N=16$, 25\% of arms as Type-A and 75\% Type-B, over $50000$ timesteps. The top and bottom rows use $B={4}$ and $B={8}$, respectively. For ease of visual representation, each column shows different combinations of algorithms -- please note that the y-axis scales for each plot may be different. Fig.~\ref{fig:toy_domain_results_vary_n} shows results for the same arm type split and simulation length with $B=8$, varying $N \in [16, 32, 48]$ (top to bottom). Parameter settings for each algorithm are included in the appendix. We see that each of our algorithms beat the baselines and converge in the limit to the Lagrange policy -- equivalent to the multi-action index policy in this case -- with the single-timescale algorithms converging quickest. Since the rewards obtained using Oracle-LP-Index coincide with Oracle LP, we do not plot the results for Oracle-LP-Index. Further, the plots demonstrate that the WIBQL algorithms underperform in general, except in cases where budgets are so small that the optimal policy effectively becomes binary-action (e.g., Fig~\ref{fig:toy_domain_results_2_vfnc} top right; $B=4$). In the remaining experiments, WIBQL is similarly dominated and so is omitted for visual clarity. In both figures, interestingly, QL-$\lambda$=0 performs well at first while $\epsilon$ is large, suggesting that a random policy would outperform the $\lambda=0$ policy. However, it eventually converges to Oracle-$\lambda$=0 as expected. 

\begin{figure}[h]
\includegraphics[width=0.8\columnwidth]{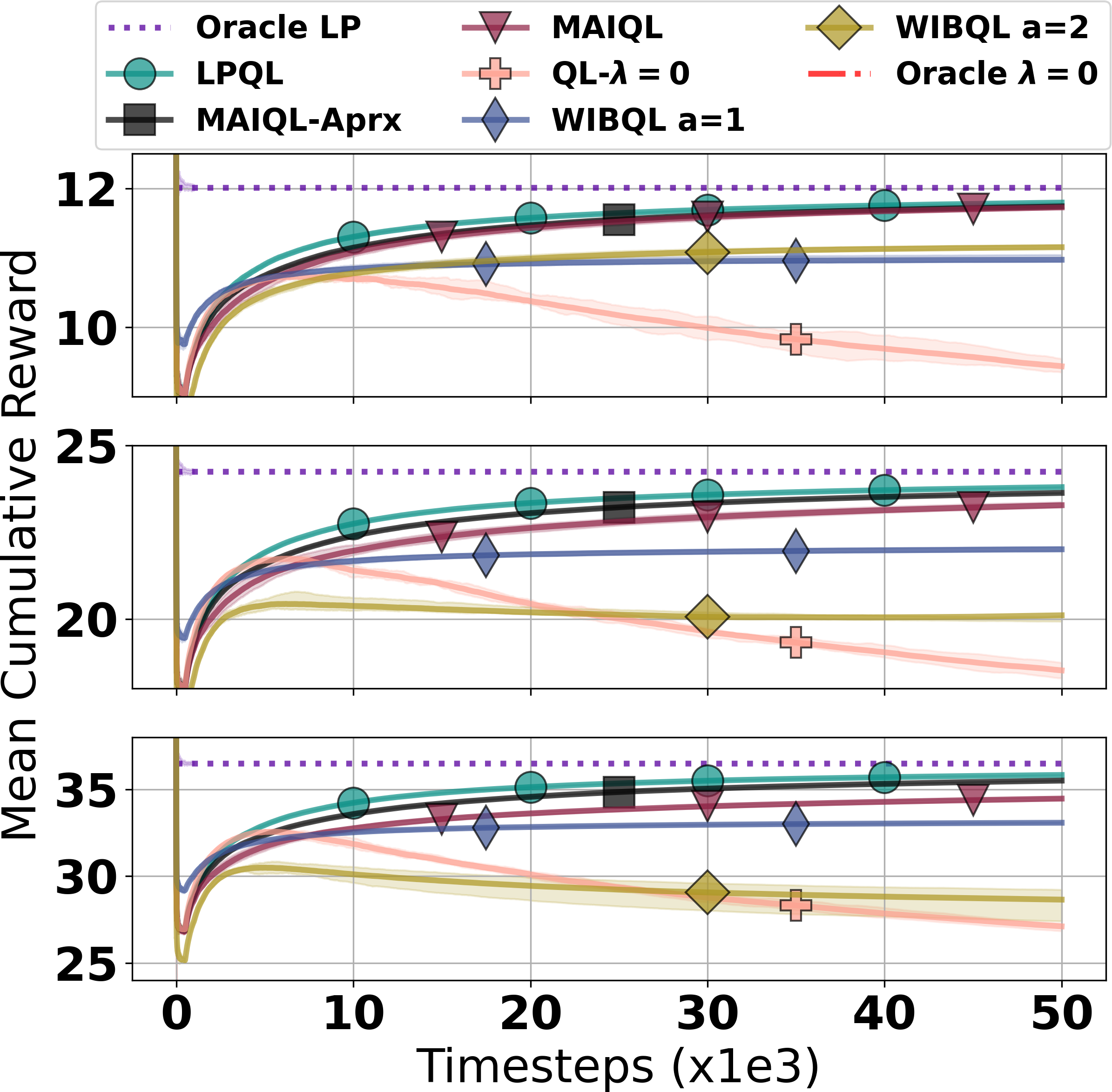}
\centering
\caption{Results from the Two Process domain with $B=8$ and  $N\in[16, 32, 48]$ (top to bottom). Budget-agnostic converges to highly suboptimal policies, while our algorithms converge to the best oracle policy, with single-timescale versions doing so the quickest. Binary action planning underperforms with the $a=2$ adaptation deteriorating as the budget becomes more constrained. Oracle $\lambda=0$ (not shown) is dominated by all lines.}
\label{fig:toy_domain_results_vary_n}
\end{figure}

% Fig.~\ref{fig:toy_domain_results_2_borkars} (top) shows that  when WIBQL plans using $a_1$ under a small budget of $B=4$, it can perform as well as the multi-action methods. However, this is only because the small budget makes it rarely optimal to take actions $a_{2}$. When $B=8$, we see that both versions of WIBQL underperform regardless of which action it uses to plan, exemplifying the additional planning power available to our multi-action algorithms. 

\subsection{Random Matrices}
\label{sec:random_experiment}
The second experimental setting demonstrates LPQL's superior generality over index policies and its robustness to increases in the number of actions and variations in cost structure. In this setting, all transition probabilities, rewards, and costs are sampled uniformly at random, ensuring with high probability that the submodular action effect structure required for MAIQL's good performance will not exist. What remains to investigate is whether LPQL will be able to learn better policies than MAIQL in such a setting. Specifically, rewards for each state on each arm are sampled uniformly from $[0,1]$, with $|\mathcal{S}|=5$. Action costs are sampled uniformly from $[0,1]^{|\mathcal{A}|}$, then we apply a cumulative sum to ensure that costs are increasing (but $c_0$ is set to 0).  Fig.~\ref{fig:random_domain_results} shows results for $N=16$ and $B=N|\mathcal{A}|/2$ as $|\mathcal{A}|$ varies in $[2, 5, 10]$ (top to bottom) over $50000$ timesteps. Note that $B$ scales with $|\mathcal{A}|$ to ensure that optimal policies will include the additional action types, since the costs of the additional action types also scale with $|\mathcal{A}|$. Rewards are shown as a moving average with a windows size of 100, which gives a clearer view of between-seed variance than the cumulative view. Fig.~\ref{fig:random_domain_results} shows that not only is LPQL able to learn much better policies than MAIQL and MAIQL-Aprx, which themselves converge to their oracle upper bound (Oracle-LP-Index), it does so with convergence behavior that is robust to increases in the number of actions, achieving near-optimal average returns at around 10k steps in each setting. Parameter settings for the different algorithms are again included in the appendix.
% Since in expectation, every action will have the same "value for cost", there will be no difference between $\lambda=0$ planning and budget-aware planning. However, since the actions will virtually never be ordered in a diminishing returns fashion, index solutions should perform suboptimally. Fig.~\ref{fig:random_results} shows exactly this, and demonstrates LPQL's robustness to increases in the number of actions that are to be learned.

\begin{figure}[h]
\includegraphics[ width=.8\columnwidth]{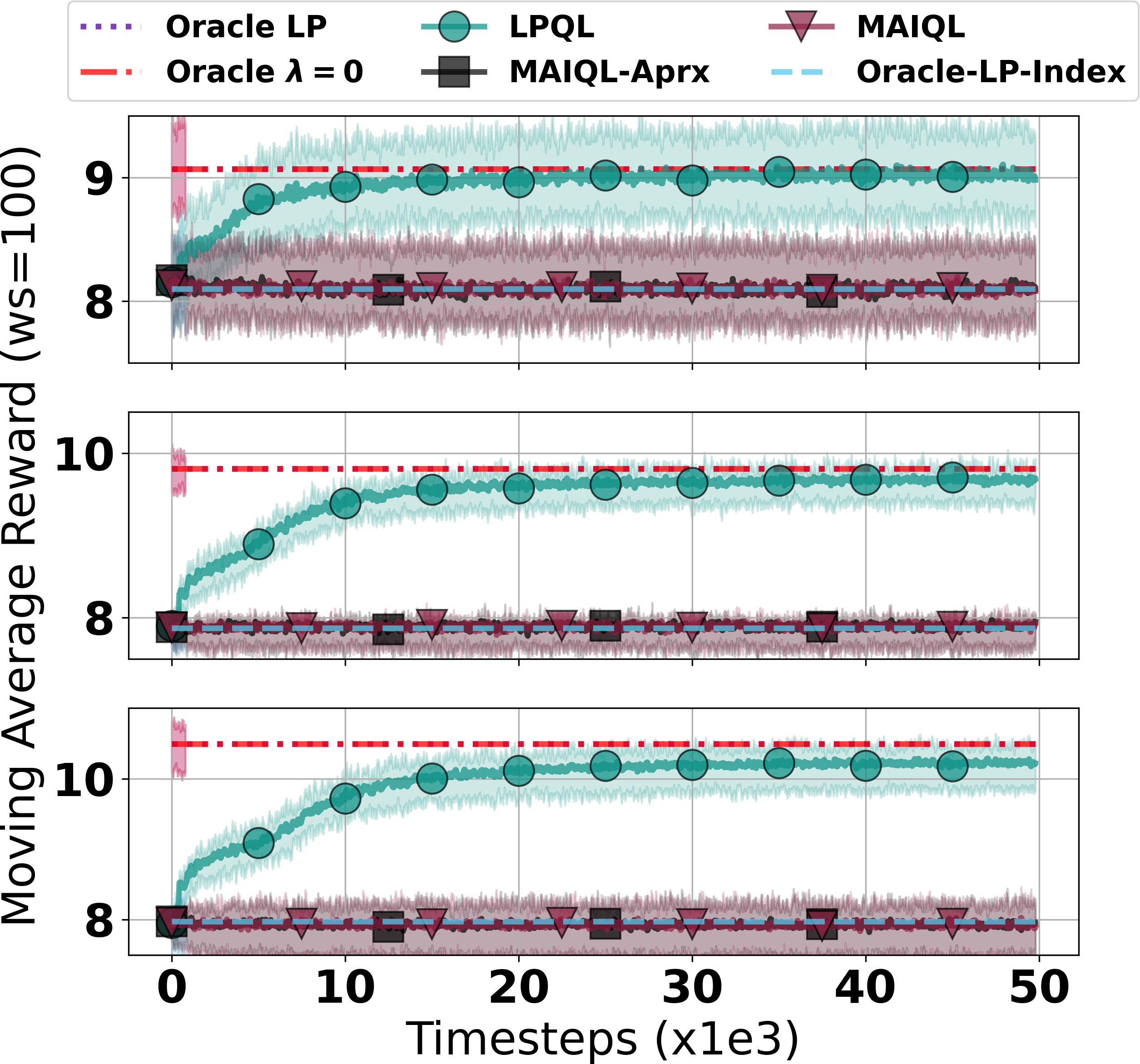}
\centering
\caption{Moving average rewards from the random domain, for $|\mathcal{A}| \in [2, 5, 10]$ (top to bottom) using a window size (ws) of 100. Oracle LP and Oracle $\lambda=0$ perform the same, as do MAIQL and MAIQL-Aprx. Oracle-LP-Index computes the index solution offline, demonstrating that MAIQL(-Aprx) are converging correctly, but the index policy performs poorly. LPQL converges quickly even as $|\mathcal{A}|$ increases.}
\label{fig:random_domain_results}
\end{figure}

% \begin{figure}[h]
% \includegraphics[ width=.48\textwidth]{figs/toy_domain.png}
% \centering
% \caption{The toy domain. Two types of arms. The first gives reward 1 as long as you take action 1. The second gives reward 1+$\epsilon$ as long as you take action 2. $\lambda$=0 planning will get tricked into picking the second type of arm. MAIQL-Approx and LPQL will both perform better here, and LPQL will outperform MAIQL since the problem is not multi-action indexable.}
% \label{fig:toy_domain}
% \end{figure}

% \begin{figure}[h]
% \includegraphics[ width=.48\textwidth]{figs/toy_domain_results.png}
% \centering
% \caption{Placeholder for results from the toy domain.}
% \label{fig:toy_domain_results}
% \end{figure}

% Note to self: Q-learning converges regardless of the policy that you follow as long as you visit all the state action pairs infinitely often in the limit. See: https://sites.ualberta.ca/~szepesva/papers/RLAlgsInMDPs.pdf page 57

\subsection{Medication Adherence}
Finally, we run an experiment using data derived in-part from a real medication adherence domain \cite{killian2019learning}. The data contains daily 0-1 records of adherence from which transition probabilities can be estimated, assuming a corresponding 0-or-1 state (partial state history can also be accommodated). However, the data contains no records of actions and so must be simulated. In this experiment, we simulate actions that assume a natural ``diminishing returns'' structure in accordance with the assumption in section \ref{sec:MAIQL}. One drawback is this estimation procedure creates uniform action effects across arms in expectation, i.e., a single ``mode''. However, in the real world we expect there to be multiple modes, representing patients' diverse counseling needs and response rates to various intervention types. To obtain multiple modes in a simple and interpretable way, we sample 25\% of arms as Type-A arms from section \ref{sec:toy2}, since they also have a binary state structure and are easily extended to accommodate partial state history. More details are given in the appendix. Note that, similar to Section~\ref{sec:toy2}, Oracle LP coincides with Oracle-LP-Index and hence, we do not plot the results for Oracle-LP-Index separately. Fig.~\ref{fig:schematic} visualizes this domain.

Fig.~\ref{fig:tb_results} shows the results for the medication adherence domain with history lengths of 2, 3, and 4 (top to bottom), $N=16$, $B=4$, and 3 actions of cost 0, 1, and 2, over $100000$ timesteps. Please see the appendix for parameter settings. This demonstrates concretely that learning on a single timescale (LPQL and MAIQL-Aprx) clearly improves speed of convergence, and this becomes more pronounced as the size of the state space increases. To understand why, we analyzed the estimated transition matrices and found that many patients had values near 0 or 1. This makes it very rare to encounter certain states, making it difficult to obtain sufficient numbers of samples across all state action pairs for MAIQL's assumptions to hold, impeding its learning.

\section{Conclusion}
To the best of our knowledge, we are the first to provide algorithms for learning Multi-action RMABs in an online setting. We show that by following the traditional approaches to RMAB problems, i.e., seeking index policies in domains with structural assumptions, MAIQL is guaranteed to converge to the optimal solution as $t \to \infty$. However, it is not efficient, due to its two-timescale structure, and is limited in scope, due to its indexability assumption. We solve these challenges by going back to the fundamentals of RMABs to develop LPQL which works well regardless of the problem structure, and outperforms all other baselines in terms of both convergence rate and obtained reward. 
Towards a real-world RMAB deployment, our models would apply to settings that allow many repeat interactions over a long horizon, e.g., life-long medication adherence regimens \cite{cote2018web}. However, since our algorithms require thousands of samples to learn, more work is needed to apply to many settings which may have short horizons. Still, this work lays a methodological and theoretical foundation for future work in online multi-action RMABs, a crucial step toward their real-world deployment.
% \begin{figure}[h]
% \includegraphics[width=\columnwidth]{figs/eng15_n12_b3_good.png}
% \centering
% \caption{Results on TB domain with 12 patients and history length of 2. LPQL is the fastest to converge of the model-free methods. MAIQL is slower to learn but does so eventually. Borkar is slower to learn and converges to a suboptimal solution regardless of the action it uses to plan. VfNc methods ($\lambda=0$) converge to suboptimal reward. Oracle methods have access to ground truth-transition probabilities.}
% \label{fig:tb_results}
% \end{figure}

\begin{figure}[h!]
\includegraphics[width=0.78\columnwidth]{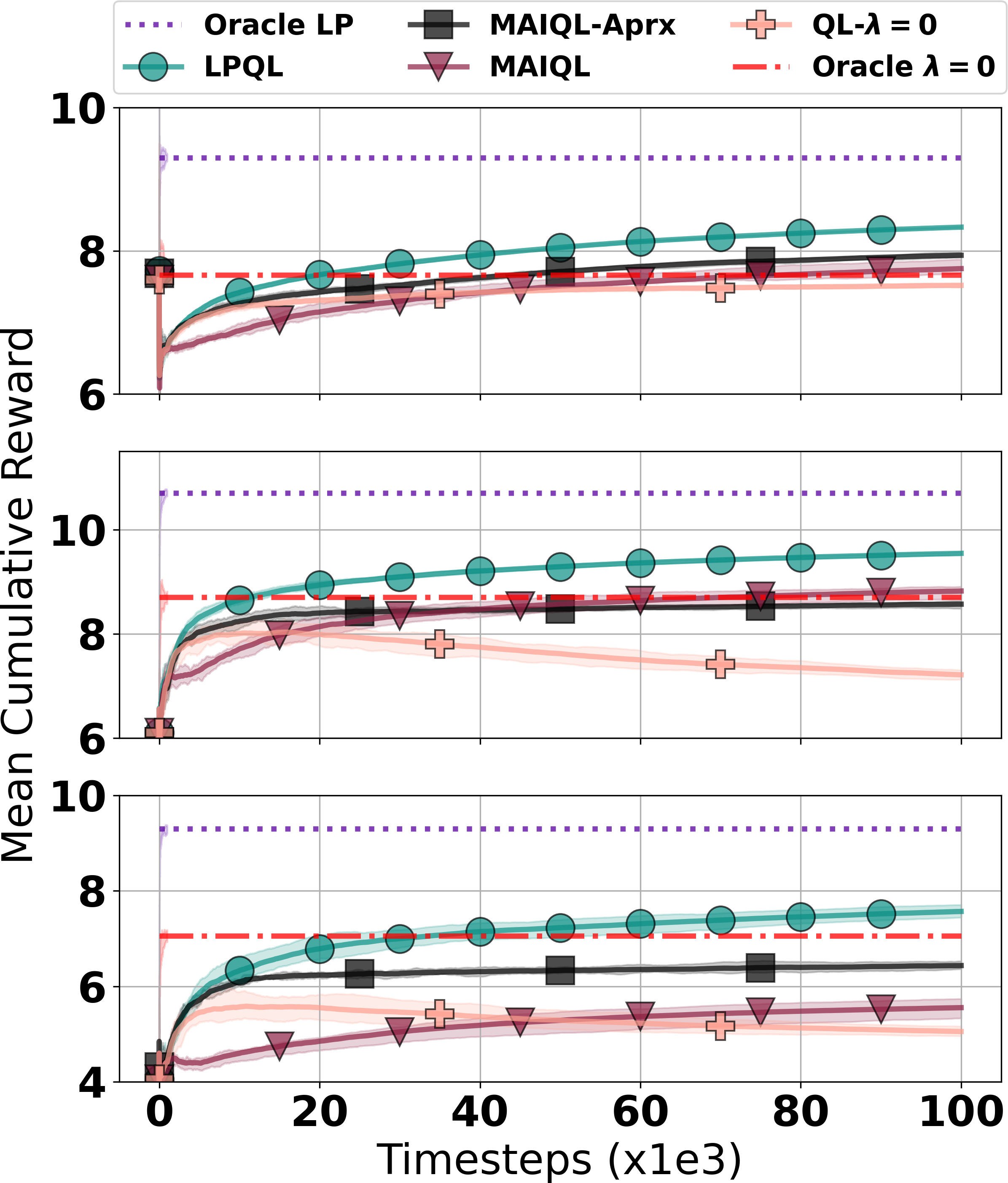}
\centering
\caption{Mean cumulative reward on medication adherence domain with 16 patients, $B=4$ and history length of 2, 3, and 4 (top to bottom). LPQL is the fastest to converge and converges to the best policies across all history lengths. MAIQL is slower to learn but does so eventually, where its approximate variant that learns on a single-timescale is more stable as the state size increases.}
\label{fig:tb_results}
\end{figure}

\begin{acks}
This work was supported in part by the Army Research Office by Multidisciplinary University Research Initiative (MURI) grant number W911NF1810208. J.A.K. was supported by an NSF Graduate Research Fellowship under grant DGE1745303. A.B. was supported by the Harvard Center for Research on Computation and Society.
\end{acks}

%%
%% The next two lines define the bibliography style to be used, and
%% the bibliography file.
\bibliographystyle{ACM-Reference-Format}
\bibliography{3_bibliography.bib}

%%
%% If your work has an appendix, this is the place to put it.
\appendix
\clearpage

\section{Proof of convergence for MAIQL}\label{sec:maiqlproof}
In this section, we provide a detailed proof of the convergence for MAIQL. We begin by stating 2 standard assumptions for establishing the convergence guarantee of Q-learning in the average-reward setting, and then add a third that's required for two time-scale convergence.

\begin{assumption}[Uni-chain Property]\label{ass:unichain}
    There exists a state $s_0$ that is reachable from any other state $s \in S$ with a positive probability under any policy.
\end{assumption}
\noindent This property formalises the notion that there aren't any `forks' in the MDP, in each of which very different outcomes could occur. This is important because, if there were a fork, the notion of `average' reward would be ill-defined as it would depend on which `fork' gets taken.

\begin{assumption}[Asynchronous Update Step-Size]\label{ass:async}
    The sequence of step-sizes $\{\alpha(t)\}$ satisfy the following properties for any $x \in (0, 1)$:
    $$\sup_t \frac{\alpha(\lfloor x t \rfloor)}{\alpha(t)} < \infty $$
    $$  \sup_{y\in [x, 1]} \left | \frac{\sum_{m=0}^{\lfloor y t \rfloor} \alpha(m)}{\sum_{m=0}^{t}\alpha(m)} - 1 \right | \to 0$$
\end{assumption}
\noindent This is a condition that is required to show that updating $Q(s, a_j)$ values one at a time with an $\epsilon$-greedy policy is equivalent to updating all the $Q(s, a_j)$ values together, in expectation.

\begin{assumption}[Relative Step-Size]\label{ass:alpha}
    The two sequences of step-sizes, $\{\alpha(t)\}$ and $\{\gamma(t)\}$, satisfy the following properties:
    \begin{alignat*}{3}
        &\text{\emph{(A)}} \; & \text{Fast Time-Scale:} \qquad
        &\sum_{t=0}^{\infty} \alpha(t) \to \infty \text{,} \qquad
        & \sum_{t=0}^{\infty} \alpha^2(t) < \infty \\ 
        & \text{\emph{(B)}} \; & \text{Slow Time-Scale:}  \qquad
        &\sum_{t=0}^{\infty} \gamma(t) \to \infty \text{,} \qquad
        & \sum_{t=0}^{\infty} \gamma^2(t) < \infty \\ 
    \end{alignat*}
    $$\text{\emph{(C)}} \quad \lim_{t \to \infty} \frac{\gamma(t)}{\alpha(t)} \to 0$$
\end{assumption}
\noindent An example of possible step sizes for which this condition is true is $\alpha(t) = \frac{1}{t}$ and $\gamma(t) = \frac{1}{t\log{t}}$. In our experiments we use $\alpha(t) = \frac{C}{\lceil \frac{t}{D} \rceil}$, and $\gamma(t) = \frac{C^\prime}{1 + \lceil \frac{t\log(t)}{D} \rceil}$.

We then detail the proof for Theorem \ref{thm:MAIQL} below. This proof involves mapping the discrete Q and $\lambda$ updates from the MAIQL algorithm (Section \ref{sec:MAIQL}) to updates in an equivalent continuous-time Ordinary Differential Equation (ODE). This conversion then allows us to use the analysis tools created to analyse the evolution of two-timescale ODEs to show that our coupled updates converge. The proof detailed below broadly follows along the lines of \citet{avrachenkov2020whittle}, but where they discuss convergence in the binary action case, we generalize their proof to the multi-action scenario by using the notion of multi-action indexability from \cite{glazebrook2006some}.

\MAIQL*

\begin{proof}
    To convert these discrete updates to ODEs, we map a given time-step $t$ to a point $\tau = T(t)$ in a continuous time, such that any time $T(t) = \sum_{m=0}^t \alpha(t)$. Because we're parameterising the time with $\alpha$ (rather than $\gamma$) we call $\tau$ the fast time-scale. To make this more concrete, we define $Q(\tau)$ as a function of the Q-value with time, and set $Q(T(t)) = Q^t$ to the value of the Q-function after $t$ updates
    % \footnote{This is not strictly correct. In our model, we assume that we update each Q-value $Q(s, a_j)$ with different $\alpha$, what is considered the `asynchronous' case in \citet{avrachenkov2020whittle}. As a result, after $t$ steps, the corresponding time will not be $\sum_{m=0}^t \alpha(t)$. This specific accounting is taken care of in Step 3 of the proof in \citet{avrachenkov2020whittle}, and uses Assumption \ref{ass:async}. But, given that the high-level idea remains the same, we ignore this detail in our proof in favour of simplicity.}
    . Then, for values of $T(t) < \tau < T(t+1)$, $Q(\tau)$ is assumed to be linearly interpolated between $Q^t$ and $Q^{t+1}$, creating a continuous function of $\tau$. Similarly, we define $\lambda(\tau)$ such that $\lambda(T(t)) = \lambda^t$
    
    We can then re-arrange the terms in Equation \ref{eqn:maqupdate} to create an ODE that characterises the value of $Q(\tau)$:
    \begin{align*}
        &Q^{t+1}(s, a_j) = Q^t(s, a) + \alpha(t) \big [ [r(s) - \lambda^{t}_{s,a_j}c_j - f(Q^t)  \\
        &\qquad \qquad \qquad \qquad \qquad \qquad +\max_{a_j'\in \{0, 1\}} Q^t(s', a_j')] - Q^t(s, a_j) \big ] \\
        \Rightarrow &\underbrace{\frac{Q^{t+1}(s, a_j) - Q^t(s, a_j)}{\alpha(t)}}_{\dot{Q}(\tau)} = [r(s) - \lambda^{t}_{s,a_j}c_j - f(Q^t)  \\[-2.5em]
        &\qquad \qquad \qquad \qquad \qquad \qquad+ \max_{a_j'\in \{0, 1\}} Q^t(s', a_j')] - Q^t(s, a_j)
    \end{align*}
    where $\dot{Q}(\tau)$ is the derivative of $Q(\tau)$ and corresponds to the slope of the interpolated function in the range $(T(t), T(t+1))$.
    
    \noindent Similarly, we can re-arrange Equation \ref{eqn:malamupdate} to get the ODE for $\lambda(\tau)$:
    \begin{align}
        & \; \lambda_{s,a_j}^{t+1} = \lambda_{s,a_j}^t + \alpha(t) \left ( \frac{\gamma(t)}{\alpha(t)} \right ) (Q^t(s, a_j) - Q^t(s, a_{j-1})) \nonumber \\ \label{eqn:fasttslam}
        \Rightarrow & \underbrace{\frac{\lambda_{s,a_j}^{t+1} - \lambda_{s,a_j}^t}{\alpha(t)}}_{\dot{\lambda}(\tau)} =  \left ( \frac{\gamma(t)}{\alpha(t)} \right ) (Q^t(s, a_j) - Q^t(s, a_{j-1}))
    \end{align}
    
    Then, if look at Equation \ref{eqn:fasttslam}, we see $\lim_{\tau \to \infty} \dot{\lambda}(\tau) \to 0$
    because, by Assumption \ref{ass:alpha} (c), $\lim_{t \to \infty} \frac{\gamma(t)}{\alpha(t)} \to 0$ and, by Assumption \ref{ass:alpha} (A), $T(\infty) = \sum_{t=0}^{\infty} \alpha(t) \to \infty$. Therefore, $\lambda(\tau)$ can be seen as quasi-static w.r.t. $Q(\tau)$ at the fast time-scale. As a result, the updates in this time-scale correspond to standard Q-Learning for a fixed MDP defined by the value of $\lambda(\tau)$. Given Assumptions \ref{ass:unichain}, \ref{ass:alpha} (A), and \ref{ass:async}, this is known to converge to the optimal Q-values $Q^*_\lambda$ for the given value of $\lambda(\tau)$ \cite{abounadi2001learning}.
    
    Now, at the slow time-scale $\tau'$, we can repeat this continuous-time re-parameterisation, except with $T'(t) = \sum_{m=0}^t \gamma(t)$. Then, re-arranging Equation \ref{eqn:maqupdate} in a similar way as above, we get:
    \begin{align*}
        &\underbrace{\frac{Q^{t+1}(s, a_j) - Q^t(s, a_j)}{\gamma(t)}}_{\dot{Q}(\tau')} =  \left ( \frac{\alpha(t)}{\gamma(t)} \right ) [r(s) - \lambda^{t}_{s,a_j}c_j - f(Q^t)  \\[-2.5em]
        &\qquad \qquad \qquad \qquad \qquad \qquad \hspace{3mm} + \max_{a_j'\in \{0, 1\}} Q^t(s', a_j')] - Q^t(s, a_j)
    \end{align*}

    Now, given that $\lim_{t \to \infty} \frac{\alpha(t)}{\gamma(t)} \to \infty$, and from the argument above about the Q-values converging in the fast time-scale, we can see the interpolated $\lambda(\tau')$ value as tracking the converged Q-values $Q^*_{\lambda(\tau')}$ (for that value of $\lambda(\tau')$). Then, we can write the ODE for $\lambda(\tau')$ as:
    $$\dot{\lambda}(\tau') = Q^*_{\lambda(\tau')}(s, a_j) - Q^*_{\lambda(\tau')}(s, a_{j-1})$$
    where $Q^*_{\lambda(\tau')}$ corresponds to the optimal Q-values corresponding to the given value of $\lambda(\tau')$.
    
    Now, if $\lambda(\tau') < \lambda^*_{s, a_j}$ (the multi-action index for state $s$ and action $a_j$), by the definition of the multi-action index from the main text, we know that an action of weight $c_j$ or higher is preferred. As a result, we see that $\dot{\lambda}(\tau') > 0$ in that case. If $\lambda(\tau') > \lambda^*_{s, a_j}$, the opposite is true and so $\dot{\lambda}(\tau') < 0$. Then, because $\lambda(0) = 0$ is bounded and given the step-sizes in Assumption \ref{ass:alpha} (B), $\lambda(\tau')$ converges to an equilibrium in which $Q^*_{\lambda}(s, a_j) - Q^*_{\lambda}(s, a_{}) \to 0$. 
    
    \emph{Given that, by definition, $\lambda^*(s, a_j)$ is the value at which $Q^*_{\lambda}(s, a_j) = Q^*_{\lambda}(s, a_{j-1})$, $\lambda(\tau')$ converges to the multi-action index.}
\end{proof}

This is a high-level proof, but the specific conditions for convergence can be seen in \citet{lakshminarayanan2017stability}. They require 5 conditions: (1) Lipschitzness, (2) Bounded `noise', (3) Properties about the relative step-sizes, (4) Convergence of fast time-scale, and (5) Convergence of slow time-scale.

Of these, (1)-(4) proceed in much the same way as in \citet{avrachenkov2020whittle} because they do not depend on the multi-action extension of indexability. In addition, it is easy to show that the proof of (5) from \citet{avrachenkov2020whittle} extends to the multi-action case which considers the limiting value of $Q(s,a_j) - Q(s,a_{j-1})$ rather than $Q(s,1) - Q(s,0)$. As a result, we refer the reader to \citet{avrachenkov2020whittle} for the complete proof.

\section{Reproducibility}
Code is available at \url{https://github.com/killian-34/MAIQL_and_LPQL}. 
% Also, please find pseudocode and additional details in the online appendix at \url{https://teamcore.seas.harvard.edu/files/teamcore/files/maiql_kdd21_online_appendix.pdf}.
All the Q and $\lambda$ values are initiated to zero in all the experiments. The parameter settings used for the two process type, random, and medication adherence data experiments are included in Tables \ref{table:two_process}, \ref{table:random}, and \ref{table:med_adherence}, respectively. $C$ is the multiplier for the size of the Q-value updates. $C^\prime$ is the multiplier for the size of the index value updates. ``Rp/dream'' is the number of replays per dream. ``Rp T'' is the replay period (replay every T steps). $\lambda$-bound is the upper bound (and negative of the lower bound) imposed on values of the indices for WIBQL and MAIQL during learning -- placing these bounds sometimes helps prevent divergent behavior in early rounds when updates are large -- $\lambda_{\max}$ is the upper bound value that an index could take, as defined by the problem parameters, i.e.,  $\frac{\max\{r\}}{\min\{\mathcal{C}\} (1-\beta)}$ \cite{killian2021multiAction}. $D$ is the divisor of the decaying $\epsilon$-greedy function as well as the divisor of $\alpha(t)$ and $\gamma(t)$, the decaying functions defining the size of the updates of Q-values and index values, defined in the previous section. $\epsilon_0$ is the multiplier for the $\epsilon$-greedy function. $n_{lam}$ is the number of points in $\lambda$-space used to approximate the Q(s, a, $\lambda$)-functions in LQPL and MAIQL-Aprx. All values were determined via manual tuning -- empirically we found that most parameter settings led to similar long-term performance between algorithms, as long as the settings did not cause the algorithms to diverge. In the tables, M-Aprx stands for MAIQL-Aprx.

\begin{table}[t]
\resizebox{\columnwidth}{!}{%
\begin{tabular}{|l|l|l|l|l|l|}
\hline
                & WIBQL    & QL-$\lambda$=0 & MAIQL & M-Aprx & LPQL     \\ \hline
C               & 0.1      & 0.2            & 0.1   & 0.4        & 0.4      \\ \hline
$C^\prime$      & 0.2      & -              & 0.2   & -          & -        \\ \hline
Rp/dream      & NA       & 1000           & 1000  & 1000       & NA       \\ \hline
Rp T          & 1E+06 & 100            & 10    & 100        & 1E+06 \\ \hline
$\lambda$-bound & 3        & -              & 3     & 3          & 3        \\ \hline
D               & 500      & 500            & 500   & 500        & 500      \\ \hline
$\epsilon_0$   & 0.99      & 0.99            & 0.99   & 0.99        & 0.99      \\ \hline
$n_{lam}$       & -        & -              & -     & 3000       & 3000     \\ \hline
\end{tabular}
}
\caption{Parameter settings for two process experiment.}
\label{table:two_process}
\end{table}

\begin{table}[t]
\resizebox{\columnwidth}{!}{%
\begin{tabular}{|l|l|l|l|l|l|}
\hline
                & WIBQL & QL-$\lambda$=0 & MAIQL & M-Aprx & LPQL     \\ \hline
C               & -     & -              & 0.2   & 0.8        & 0.8      \\ \hline
$C^\prime$      & -     & -              & 0.4   & -          & -        \\ \hline
Rp/dream      & -     & -              & 1000  & NA         & NA       \\ \hline
Rp T          & -     & -              & 100   & 1E+06   & 1E+06 \\ \hline
$\lambda$-bound & -     & -              & $\lambda_{\max}$  & $\lambda_{\max}$       & $\lambda_{\max}$     \\ \hline
D               & -     & -              & 500   & 500        & 500      \\ \hline
$\epsilon_0$   & -      & -            & 0.99   & 0.99        & 0.99      \\ \hline
$n_{lam}$       & -     & -              & -     & 2000       & 2000     \\ \hline
\end{tabular}
}
\caption{Parameter settings for random data experiment.}
\label{table:random}
\end{table}

\begin{table}[t]
\resizebox{\columnwidth}{!}{%
\begin{tabular}{|l|l|l|l|l|l|}
\hline
                & WIBQL & QL-$\lambda$=0 & MAIQL & M-Aprx & LPQL \\ \hline
C               & -     & 0.8            & 0.05  & 0.8        & 0.8  \\ \hline
$C^\prime$      & -     & -              & 0.1   & -          & -    \\ \hline
Rp/dream      & -     & 1000           & 1000  & 1000       & 1000 \\ \hline
Rp T          & -     & 10              & 5     & 5          & 5    \\ \hline
$\lambda$-bound & -     & -              & $\lambda_{\max}$  & $\lambda_{\max}$       & $\lambda_{\max}$ \\ \hline
D               & -     & 1000           & 2000  & 1000       & 1000 \\ \hline
$\epsilon_0$   & -      & 0.99            & 0.99   & 0.99        & 0.99      \\ \hline
$n_{lam}$       & -     & -              & -     & 2000       & 2000 \\ \hline
\end{tabular}
}
\caption{Parameter settings for adherence data experiment.}
\label{table:med_adherence}
\end{table}

\section{Algorithm Pseudocodes}
See Algorithms \ref{alg:maiql_update} and \ref{alg:maiql_act} for the update and action selection steps of MAIQL and Algorithms \ref{alg:lpql_update} and \ref{alg:lpql_act} for the update and action selection steps of LPQL. The linear program for Oracle-LP-Index for a given current state $\bm{s}_{cur}$ and action $a_k$ is given below:

\begin{equation}
\begin{aligned}\label{eq:oracle_lp_index_lp}
    &\min_{V^i(s^i, \lambda^i), \lambda^i} \sum_{i=0}^{N-1}\frac{\lambda^i B}{1-\beta} + \sum_{i=0}^{N-1}\mu^i(s^i) V^i(s^i, \lambda^i) \\
    &\text{s.t. }V^i(s^i, \lambda^i) \ge r^i(s^i) - \lambda^i c_j + \beta\sum_{s^{i\prime}}T(s^i, a_j^i, s^{i\prime}) V^i(s^{i\prime},\lambda^i) \\
    & \hspace{24mm} \forall i \in \{0,...,N-1\},\hspace{2mm} \forall s^i \in \mathcal{S},\hspace{2mm} \forall a_j \in \mathcal{A} \\
    &\hspace{5mm}r^i(s^i_{cur}) - \lambda^i c_k + \beta\sum_{s^{i\prime}}T(s_{cur}^i, a_k^i, s^{i\prime}) V^i(s^{i\prime},\lambda^i) = \\
    &\hspace{10mm} r^i(s^i_{cur}) - \lambda^i c_{k-1} + \beta\sum_{s^{i\prime}}T(s_{cur}^i, a_{k-1}^i, s^{i\prime}) V^i(s^{i\prime},\lambda^i) \\
    & \hspace{52mm} \forall i \in \{0,...,N-1\} \\
    &\hspace{5mm}\lambda^i \ge 0 \hspace{2mm} \forall i \in \{0,...,N-1\}
\end{aligned}
\end{equation}

\noindent The LP is similar to Eq.~\ref{eq:the_lp}, but differs in two ways. First, instead of having a single $\lambda$ value across all arms, each arm has its own independent $\lambda^i$ value. Second, the second group of constraints is new, and forces the $\lambda^i$ values to be set such that the planner would be indifferent between taking the action in question $a_k$ or the action that is one step cheaper $a_{k-1}$, which follows exactly the definition of the multi-action indexes. Note that although the indexes can each be computed independently, for convenience, we compute the index for a given $a_k$ for each arm simultaneously to reduce overhead, as given in the above LP.

The \textsc{ActionKnapsackILP} referenced in Algorithm \ref{alg:lpql_act} is the same as the modified knapsack given in \citet{killian2021multiAction}, reproduced below:

\begin{equation}
\begin{aligned}
    &\max_{\bm{A}} \sum_{i=0}^{N-1}\sum_{j=0}^{M-1}\bm{A}_{ij}Q_{\bm{s},\lambda_{ind}}(i, a_j) \\
    &\text{s.t. }\sum_{i=0}^{N-1}\sum_{j=0}^{M-1} \bm{A}_{ij}c_j \le B \\
    &\sum_{j=0}^{M-1} \bm{A}_{ij} = 1 \hspace{3mm} \forall i \in 0,\ldots,N-1 \\
    &\bm{A}_{ij} \in \{0,1\}
    \label{eq:knapsack}
\end{aligned}
\end{equation}
\noindent where $Q_{\bm{s},\lambda_{ind}}(i, a_j)$ is the $Q$-function for each arm filtered to the current state of the arms, $\bm{s}$, and minimizing value $\lambda_{ind}$, as given by the penultimate line of Algorithm \ref{alg:lpql_act}.

\begin{algorithm}[b!]
\DontPrintSemicolon
\SetAlgoLined
\SetNoFillComment
\LinesNotNumbered
%\SetSideCommentLeft
\KwData{
$Q \in \mathbb{R}^{N\times|\mathcal{S}|\times(|\mathcal{A}|-1)\times|\mathcal{S}|\times|\mathcal{A}|}$, \tcp*{Need one copy of $Q[s,a]$ for each index on each arm} \
\hspace{1mm}$\lambda\in\mathbb{R}^{N\times|\mathcal{S}|\times(|\mathcal{A}|-1)},$ \tcp*{multi-action index estimates} \
\hspace{1mm}$\text{Batch}, \mathcal{C},$ \tcp*{Experience tuples, action costs} \
\hspace{1mm}$t, \nu(\cdot),$ \tcp*{iteration, state-action counter} \ 
\hspace{1mm}$\mathcal{S}, \mathcal{A}, N$ \tcp*{state space, action space, \# of arms}
}
\textbf{Hyperparameters:} $\beta, C, C^{\prime}, D$ \tcp*{See maintext} \
\LinesNumbered
\For{$(n, s, a, r, s^\prime) \in \text{Batch}$}{
    
    $\alpha = \frac{C}{\lceil \frac{\nu(s,a,n)}{D} \rceil}$ \;

    \For{$i \in 0, \ldots, |\mathcal{S}|$}{
        \For{$j \in 1, \ldots, |\mathcal{A}|$}{
            $Q[n,i,j,s,a] \mathrel{+}= \alpha(r - \mathcal{C}[a]*\lambda[i, j] + \beta*\max\{Q[n,i,j,s^\prime]\} - Q[n,i,j,s,a])$ \;
        }
    }
    
    \If{$a \neq 0 \And t\Mod{N}==0$}
    {
        $\gamma = \frac{C^{\prime}}{1 + \lceil \frac{\nu(s,a,n)\log{\nu(s,a,n)}}{D} \rceil}$ \;
    
        $\lambda[s, a] \mathrel{+}= \frac{\gamma(Q[n,s,a,s,a]) - Q[n,s,a,s,a-1])}{\mathcal{C}[a] - \mathcal{C}[a-1]}$ \;
    }
}
 \Return $Q$, $\lambda$
 \caption{MAIQL Update}
 \label{alg:maiql_update}
\end{algorithm}

\begin{algorithm}[t!]
\DontPrintSemicolon
\SetAlgoLined
\SetNoFillComment
\LinesNotNumbered
\KwData{$\lambda\in\mathbb{R}^{N\times|\mathcal{S}|\times(|\mathcal{A}|-1)},$ \tcp*{multi-action index estimates} \
$\hspace{1mm} \bm{s}\in\mathbb{R}^N$ \tcp*{current state of all arms} \
$\hspace{1mm} t, N, B$ \tcp*{current iteration, \# of arms, budget}
}
\If{$\textsc{EpsilonGreedy}(t)$}
{
    % \For{$\_ \in 0 \ldots B$}{
    % $a[\textsc{RandInt}(N)] \mathrel{+}= 1$ \;
    % }
    \Return $\textsc{RandomAction}()$\;
}
\Else{
$\bm{a} = [0 \text{ for \_ in range}(N)]$\;
$\lambda_{f} = \textsc{FilterCurrentState}(\lambda,\bm{s})$ \tcp*{$\lambda_{f}\in\mathbb{R}^{N\times(|\mathcal{A}|-1)}$} \
\For{$i \in 0 \ldots B$}{
    $i = \argmax(\lambda_{f}[\bm{a}+1] - \lambda_{f}[\bm{a}])$ \tcp*{$\bm{a}$ is a vector index, $\argmax$ ignores out of bounds indexes}
    $a[i] \mathrel{+}= 1$ \;
}
}

 \Return $\bm{a}$
 \caption{MAIQL Action Select}
 \label{alg:maiql_act}
\end{algorithm}

\begin{algorithm}[t]
\DontPrintSemicolon
\SetAlgoLined
\SetNoFillComment
\LinesNotNumbered
\KwData{
$Q \in \mathbb{R}^{N\times n_{lam} |\mathcal{S}|\times|\mathcal{A}|}$, \tcp*{Need one copy of $Q[s,a]$ for each of the $n_{lam}$ test points on each arm} \
\hspace{1mm}$\text{Batch}, \mathcal{C},$ \tcp*{Experience tuples, action costs} \
\hspace{1mm}$\lambda_{\max},$ \tcp*{Max $\lambda$ at which to estimate $Q$} \
\hspace{1mm}$n_{lam},$ \tcp*{\# of $\lambda$ points at which to estimate $Q$} \
\hspace{1mm}$\nu(\cdot)$ \tcp*{state-action counter}
} 
\textbf{Hyperparameters:} $\beta, C, D$ \tcp*{See maintext} \
\For{$(n, s, a, r, s^\prime) \in \text{Batch}$}{
    
    $\alpha = \frac{C}{\lceil \frac{\nu(s,a,n)}{D} \rceil}$

    \For{$i \in 0,\ldots,n_{lam}$}{
        $\lambda_p = \frac{i*\lambda_{\max}}{n_{lam}}$ \;
        $Q[n,i,s,a]\mathrel{+}=\alpha(r - \mathcal{C}[a]*\lambda_p + \beta*\max\{Q[n,i,s^\prime]\} - Q[n,i,s,a])$
    }
}
 \Return Q
 \caption{LPQL Update}
 \label{alg:lpql_update}
\end{algorithm}

\begin{algorithm}[t]
\DontPrintSemicolon
\SetAlgoLined
\SetNoFillComment
\LinesNotNumbered
\KwData{$Q \in \mathbb{R}^{N\times n_{lam} |\mathcal{S}|\times|\mathcal{A}|}$, \tcp*{$Q$-functions for each of the $n_{lam}$ test points on each arm} \
$\hspace{1mm} \bm{s}\in\mathbb{R}^N$ \tcp*{current state of all arms} \
\hspace{1mm}$\lambda_{\max},$ \tcp*{Max $\lambda$ at which $Q$ is estimated} \
\hspace{1mm}$n_{lam},$ \tcp*{\# of $\lambda$ points at which $Q$ is estimated} \
$\hspace{1mm} t, \beta$ \tcp*{iteration, discount factor} \
$\hspace{1mm} N, \mathcal{C}, B$ \tcp*{\# of arms, action costs, budget} 
}
\If{$\textsc{EpsilonGreedy}(t)$}
{
    \Return $\textsc{RandomAction}()$\;
}
$\bm{a} = [0 \text{ for \_ in range }(N)]$\;
$Q_f = \textsc{FilterCurrentState}(Q,\bm{s})$ \tcp*{$Q_f \in \mathbb{R}^{N\times n_{lam}\times|\mathcal{A}|}$}\
$\lambda_{ind}=-1$ \;
\tcc{The min of Eq.~\ref{eq:the_lp} occurs at the point where the negative sum of slopes of all $V^i=\max\{Q^i_\lambda\}$ is $\le$ $B/(1-\beta)$, so we will iterate through our estimates of $Q^i_\lambda$ and stop our search at the first point where that is true.}
\For{$i \in 0,\ldots,n_{lam}$}{
    $\lambda_p^0 = \frac{i*\lambda_{\max}}{n_{lam}}$ \;
    $\lambda_p^1 = \frac{(i+1)*\lambda_{\max}}{n_{lam}}$ \;
    $m_V = \frac{\max_a\{Q_f[:,i+1]\} - \max_a\{Q_f[:,i]\}}{\lambda_p^1 - \lambda_p^0}$ \tcp*{$m_V \in \mathbb{R}^N$} \
    \If{$ \sum_n\{m_V\} \ge -\frac{B}{1-\beta}$}
    {
        $\lambda_{ind}=i$ \;
        \text{break}
    }
}

$\bm{a} = \textsc{ActionKnapsackILP}(Q_f[:,\lambda_{ind},:],\mathcal{C}, B)$

 \Return $\bm{a}$
 \caption{LPQL Action Select}
 \label{alg:lpql_act}
\end{algorithm}

\textsc{RandomAction}, referenced in Algorithms \ref{alg:maiql_act} and \ref{alg:lpql_act}, chooses random actions through the following iterative procedure: (1) randomly choose an arm with uniform probability, (2) randomly choose an action with probability inversely proportional to one plus its cost (must add one to avoid dividing by 0 for no-action). The procedure iterates until the budget is exhausted.

$\textsc{EpsilonGreedy}(t)$, also referenced in Algorithms \ref{alg:maiql_act} and \ref{alg:lpql_act}, draws a uniform random number between 0 and 1 and returns true if it is less than $\epsilon_0/\ceil*{\frac{t}{D}}$ and false otherwise.

\section{Medication Adherence Setting Details}
We used the following procedure to estimate transition probabilities from the medication adherence data from \citet{killian2019learning}. First, we specify a history length of $L$. This gives a state space of size $2^L$ for each arm. Then, for each patient in the data, we count all of the occurrences of each state transition across a treatment regimen of 6 months (168 days). If $L$ was small (e.g., 1 or 2), we could take a frequentist approach and simply normalize these counts appropriately to get valid transition probabilities to sample for experiments. However, as the history length $L$ gets larger, the number of non-zero entries in the count data for state transitions become large. We take two steps to account for this sparsity. (1) We run $K$-means clustering over all patients, using the count data as features, then combine the counts for all patients within a cluster. Intuitively, the larger the $K$, the more ``peaks'' of the distribution of patient adherence modes we will try to approximate, but the fewer data points are available to estimate the distribution in each cluster --- however, it may be desirable to have more clusters to allow for some samples to come from uncommon but ``diverse'' modes that may be challenging to plan for. In this paper, we set $K$ to 10. (2) We then take a Bayesian approach, rather than a frequentist approach for sampling patients/processes from the clustered counts data. That is, we treat the counts as priors of a beta distribution, then sample transition probabilities from those distributions according to the priors. Finally, to simulate action effects, since actions were not recorded in the available adherence data, we scale the priors multiplicatively according to the index of the action, i.e., larger/more expensive actions increase the priors associated with moving to the adhering state. 

\balance

In summary, to get a transition function for a single simulated arm in the medication adherence experimental setting, we do the following. First, randomly choose a cluster, with probability weighted by the number of patients in the cluster. Then, build up a transition matrix by sampling each row according to its own beta distribution with priors given by the counts data (i.e., actual observations of $s\rightarrow{}s^\prime$ transitions), scaled by the action effects. 

This process was desirable for producing simulated arms with transition functions tailored to resemble that of a real world dataset, while allowing for some randomness via the sampling procedure, as well as a straightforward way to impose simulated action effects. However, one downside of this approach is that, since each row of the transition matrix is sampled independently, this may produce simulated arms whose probability of adherence changes in a non-smooth manner as a function of history. For example, in the real-world, we would expect that $P(0111\rightarrow{}1111)$ is correlated with $P(1011\rightarrow{}0111)$ and that $P(0000\rightarrow{}0000)$ is correlated with $P(1000\rightarrow{}0000)$, but our procedure would not necessarily enforce these relationships if there were not sufficient occurrences of each transition in the counts data.

The python code used to execute this procedure is included in the repository at \url{https://github.com/killian-34/MAIQL_and_LPQL}.

% \begin{algorithm}[b]
% \SetAlgoLined
% \KwData{$Q, C, B$} \tcp*{$Q$ filtered to current state of each arm $i$}\
% $z$ = [] \tcp*{$N\times B$ Table for solving DP}\
% \For{$j \in 0,\ldots,B$}{
%     \tcc{Initialize top row to 0}
%     $z[0, j] = 0$\;
% }
% \For{$i \in 0,\ldots,N-1$}{
%     \For{$j \in 0,\ldots,B$}
%     {
%         $z[i, j] = z[i-1, j] + Q[i,0]$\;
%         \For{$k \in 1,\ldots,|\mathcal{A}|-1$}
%         {
%             \If{$C[k] \le j$} 
%             {
%                 $z[i, j] = \max\{z[i, j], z[i-1, j-C[k]] + Q[i,k]\}$\;
%             }
%         }
%     }
% }
%  \Return $z$
%  \caption{ActionKnapsackLP}
%  \label{alg:knapsack}
% \end{algorithm}

% \section{Proof for LPQL}
% \subsection{Theorem 2}

% \section{Experimental Setup}

\end{document}